\theoremstyle{definition}
\newtheorem{assumption}{Assumption}
\newtheorem{example}{Example}
\newtheorem{theorem}{Theorem}
\newtheorem{lemma}[theorem]{Lemma}
\title{Bootstrapping Diffusion: Diffusion Model Training Leveraging Partial and Corrupted Data}
\author{Xudong Ma, San Francisco, California, xma@ieee.org}
\begin{document}

\maketitle

\begin{abstract}
	
Training diffusion models requires large datasets. However, acquiring large volumes of high-quality data can be challenging, for example, collecting large numbers of high-resolution images and long videos. On the other hand, there are many complementary data that are usually considered corrupted or partial, such as low-resolution images and short videos. Other examples of corrupted data include videos that contain subtitles, watermarks, and logos. In this study, we investigate the theoretical problem of whether the above partial data can be utilized to train conventional diffusion models. Motivated by our theoretical analysis in this study, we propose a straightforward approach of training diffusion models utilizing partial data views, where we consider each form of complementary data as a view of conventional data. Our proposed approach first trains one separate diffusion model for each individual view, and then trains a model for predicting the residual score function. We prove generalization error bounds, which show that the proposed diffusion model training approach can achieve lower generalization errors if proper regularizations are adopted in the residual score function training. In particular, we prove that the difficulty in training the residual score function scales proportionally with the signal correlations not captured by partial data views. Consequently, the proposed approach achieves near first-order optimal data efficiency.

\end{abstract}
\section{Introduction}
\label{sec:intro}

Diffusion models are state-of-the-art generative deep learning models, such as image generation \citep{dhariwal2021diffusion}, \citep{rombach2022high}, \citep{gu2023pixart}, \citep{song2022versatile}, video generation \citep{ho2022video}, \citep{openai2025sora}, \citep{modelscope2023}, \citep{animatediff2023}, and 3D generation \citep{lin2022magic3d}, \citep{poole2022dreamfusion}, \citep{chen2023fantasia3d}, \citep{zhou2021pointvoxel}, \citep{cheng2023sdfusion}. However, modern diffusion models face significant challenges; they require extensive datasets for effective training. State of the art image generation diffusion models are typically trained on image datasets containing billions of images \citep{schuhmann2022laion5b}, \citep{kakaobrain2023coyo}. State of the art video datasets usually contain 10 to 100 millions of videos \citep{bain2021webvid}, \citep{xue2022hdvila}. However, there remains a strong demand for large, high-quality image and video datasets, particularly those that feature high-resolution and long-duration videos. It is well known from scaling laws (e.g., \citep{liang2024scaling}) that the generation performance can be further improved by scaling both model sizes and training dataset sizes. Additionally, there is a significant demand for high-resolution image and video data in specialized domains, such as long-tail scenarios in autonomous driving and robotic perception in manufacturing and assembly environments.

In this paper, we consider a scenario in which a relatively small dataset of full-resolution images and videos is available, while a substantially larger dataset consists of partially observed or corrupted data. To address the above data demand, we explore an approach of leveraging the so-called partial or corrupted data for diffusion training. For training diffusion models for high-resolution images or long video generation, there are usually far more low-resolution images and short video data available. Such low-resolution and short data may still be useful for diffusion model training if we consider them partial observations of the required data. In other words, low-resolution images and short videos are just down-sampled versions of the required data. In addition, many real-world images and video data may contain artifacts such as subtitles, watermarks, and logos. These elements make such data unsuitable for diffusion model training because it is undesirable for models to learn and reproduce these non-content features during generation. However, patches from these image and video data may still be useful because they can also be considered as partial observations of the required data.

In this study, we attempt to answer the fundamental question of whether partial view data can be leveraged for training a conventional full-resolution diffusion model. Here, we refer to each form of incomplete observation as a partial view. For instance, a low-resolution image can be regarded as a partial view of the high-resolution image on which the diffusion model would be trained. Likewise, an image patch may also be interpreted as a specific partial view of the full-resolution image.

Our study provides an affirmative answer to this question by demonstrating that partial-view data can be utilized to effectively mitigate the challenges associated with the stringent data requirements of training the full-resolution diffusion model. We organize our presentation such that we first present a straightforward approach for training diffusion models utilizing partial-view data; subsequently, we offer a theoretical analysis of the proposed approach. We call our proposed approach bootstrapping diffusion.

In the first stage of our proposed approach, we independently train a diffusion model for each distinct data view. Specifically, we learn a denoiser neural network that estimates the expectations of the clean partial view data from its corresponding noisy observation following standard diffusion model training. The approach then aggregates the partial-view expectations into one expectation for full-resolution data, for example, via a linear combination of the individual view-wise expectations. In the subsequent stage, we train a residual denoiser to predict the discrepancy between the ground-truth expectation and the aggregated expectation. Motivated by our theoretical analysis, we adopt variance regularization on the residual denoiser. The $l$2 norm of the output of the residual denoiser either penalized or explicitly scaled to match the true signal statistics.

We prove the generalization error bounds for the cases of training without partial-view data and training with partial-view data via the bootstrapping approach. Based on these generalization error bounds, we demonstrate that the proposed approach achieves lower generalization errors than the approach without training with partial-view data. We also demonstrate that the difficulty of training the residual denoiser is proportional to the amount of global signal correlation that cannot be captured by partial-data views. In particular, bootstrapping diffusion is near first-order optimal in terms of data efficiency. Finally, We present experimental results on AFHQv2-Cat dataset.

The main contributions of this paper are summarized as follows:
\begin{itemize}
\item We propose a novel approach to training diffusion models that effectively leverages partially observed and corrupted data.

\item We provide a theoretical analysis of the generalization error bounds for the proposed method, demonstrating that the generalization error can be reduced by applying  variance regularizations to the residual denoiser.

\item We present experimental results that validate our theoretical findings and demonstrate the practical effectiveness of our approach.
\end{itemize}
\section{Related Work}

Training diffusion models on noisy data to recover clean data distributions has been previously discussed in the literature. In \citep{daras2023ambient} and \citep{daras2024consistent}, an ambident diffusion approach was proposed, where additional distortions were introduced so that the diffusion model could predict the original corrupted images from its further corrupted version. In \citep{kawar2023gsure}, a GSURE-diffusion approach was proposed, in which Stein's lemma was used to avoid computing MSEs using clean data samples directly.  In \citep{bai2024expectation}, an expectation-maximization-based approach was used, where in the E-steps, a model initialized from a clean image dataset was used to generate samples, and in the M-step, the log likelihood of the generated samples was maximized.

Several notable distinctions exist between our proposed approach and prior methods in the literature. Existing approaches generally assume that partial views contain sufficient information to effectively train diffusion models for full-resolution data. In contrast, our study focuses on scenarios in which partial views lack certain essential information, necessitating the use of a complementary full-resolution dataset to effectively train the diffusion model at full resolution. Moreover, the discussions in this paper primarily address the theoretical question of whether partial-view data can aid in training diffusion models at full resolution, and to what extent data efficiency improvements can be achieved.

\section{Preliminary}

\subsection{Diffusion Models}
Drawing inspiration from non-equilibrium statistical physics, diffusion models are a flexible and tractable framework for sampling from complex data distributions
\citep{sohl2015deep}, \citep{ho2020denoising}. Diffusion models adopt a forward process to gradually destroy structures in data distributions and then learn a reverse process to recover structures from pure noisy distributions. In continuous-time formulation, both the forward and reverse processes can be described by stochastic differential equations (SDEs) \citep{song2021scorebased}. The SDE for the forward process can be written as
\begin{align}
dx = f(x_t,t)dt + g(t)dw  \nonumber
\end{align}
where, $x_t$ is the continuous time variables with $t\in [0, T]$, $x_0$ is the original data sample and $x_T$ is the sample of prior distribution, $w$ is the standard Wiener process, $f(x_t, t)$ is the drift coefficient and $g(t)$ the diffusion coefficient. The reverse process SDE is \citep{anderson1982reverse},
\begin{align}
dx = \left[f(x_t, t) - g(t)^2 \nabla_{x_t} \log p_t(x_t)\right]dt + g(t)\bar{w}
\end{align}
where $\bar{w}$ is another standard Wiener process, $p_t$ is the probability distribution for $x_t$. The term $\nabla_{x_t} \log p_t(x_t)$ is known as the score function. By Tweedie's formula \citep{efron2011tweedie},
\begin{align}
\nabla_{x_t} \log p_t(x_t)=\frac{E[x_0|x_t] - x_t}{\sigma_t^2}
\end{align}
where $x_t = x + \mathcal{N}(0, \sigma_t^2 I)$.Training diffusion models essentially reduces to estimating score functions. Several equivalent but distinct approaches exist for this estimation. In this paper, we consistently use deep neural networks to predict the conditional expectation 
$E[x_0 \mid x_t]$ given a noisy sample $x_t$
and the corresponding time step $t$.

\subsection{Covering numbers}
\label{subsec:covering_number}

Covering numbers provide a metric for quantifying the complexity of neural network hypothesis classes. Specifically, a covering number indicates how many balls of a small radius $\epsilon$ are necessary to entirely cover the set of functions within a hypothesis class 
$\mathcal{F}$. In this paper, we primarily adopt the definition of covering numbers from \citet{graf2022excess}, where the distance $d(f_1, f_2)$ between two functions 
$f_1,f_2 \in \mathcal{F}$ is defined with respect to the training dataset 
$\{x_1,x_2,\ldots,x_N\}$. Concretely, this definition is given by:
\begin{align}
d(f_1, f_2) = \sqrt{\frac{1}{N} \sum_{n=1}^{N} \|f_1(x_n) - f_2(x_n)\|^2}
\end{align}
We use $\mathcal{N}(\mathcal{F}, \epsilon, d)$ to denote the covering number.

For standard neural networks with bounded parameters and activation functions (such as ReLU), it is established in the literature \citep{graf2022excess} that covering numbers typically scale exponentially with network depth and polynomially with network width and the inverse radius $1/ \epsilon$. Additionally, covering numbers depend on the Lipschitz constant of the neural network, increasing as the Lipschitz constant grows. While exact covering numbers may vary among different neural network architectures, we follow the general assumptions discussed in prior works, such as \citet{graf2022excess}, summarized as follows.

\begin{assumption}
\label{assumption:cover_num}
We assume the covering number of the function class according to our neural network is upper bounded by
\begin{align}
\log \mathcal{N}(\mathcal{F}, \epsilon, d) \leq \bar{L} W \log\left(1+\bar{L} C \frac{N}{\epsilon}\right)
\end{align}
\begin{itemize}
	\item $\bar{L}$ is a value that is mainly determined by the depth of the neural network
	\item $W$ is the largest total number of parameters within one network layer, for example, if the convolutional layer has $c_{in}$ input channels, $c_{out}$ output channels, and kernel size $(k_h, k_w)$, then the total number of parameters within this layer is $W = c_{in} \times c_{out} \times k_h \times k_w$
	\item  $C$ is a parameter mainly determined by the Liptschiz of the neural network, whereas $C$ increases when the Liptschiz number increases. When the Liptschize of the neural network goes to zero, $C$ goes to zero, thus the covering number  $\mathcal{N}(\mathcal{F}, \epsilon, d)$ goes to 1.
\end{itemize}
\end{assumption}

\begin{example}
Theorem 3.4. in \citep{graf2022excess} provides upper bounds for covering numbers of residual networks. The upper bound in Equation (15b) satisfies the assumption \ref{assumption:cover_num}.
\end{example}

\subsection{Notation and additional assumptions}

\begin{assumption}
\label{assumption:bounded}
We assume that each element of the clean data samples $x_0$ is bounded within the interval $[-U,U]$, where $U>0$. Under this assumption, it follows that the conditional expectation $E[x_0 \mid x_t]$ is also bounded element-wise by the same interval $[-U,U]$. Let $m$ represent the dimension of $x_0$; this implies that the $l_2$-norm satisfies 
$\|x_0\|_2 \leq \sqrt{m}U$. Consequently, we also assume that any denoiser used to estimate $\mathbb{E}[x_0 \mid x_t]$ is similarly bounded element-wise within $[-U,U]$.
\end{assumption}

Throughout this paper, random vectors are denoted by boldface uppercase letters, while their realizations are represented by the corresponding boldface lowercase letters.

\section{Bootstrapping Diffusion}
\label{sec:bootstrap}

Before proceeding into further technical details, we first establish the required notation. Let us denote the full-resolution dataset as $\mathcal{S}_0 = \{x_{1,0}, x_{2,0}, \ldots, x_{n,0}, \ldots, x_{N_0, 0}\},$
with cardinality $|\mathcal{S}_0| = N_0$. Additionally, we assume the availability of $I$ supplementary datasets, labeled as $\mathcal{S}_1, \mathcal{S}_2, \ldots, \mathcal{S}_I.$
We assume all data samples across these datasets share the same dimension $m$ and originate from the same underlying probability distribution $p_0$. Within each individual dataset, samples are independently and identically distributed (IID). Furthermore, dataset $\mathcal{S}_0$ has no intersection with the other datasets, whereas datasets $\mathcal{S}_1, \ldots, \mathcal{S}_I$ may contain duplicated samples. We directly observe only the dataset $\mathcal{S}_0$. For each of the datasets $\mathcal{S}_i$ with $i = 1,\ldots,I$, observations are only available through specific data views. In particular, there exists a projection matrix $A_i$ corresponding to each dataset $\mathcal{S}_i$, such that we can only observe the projected samples 
$A_i x, \quad \text{for each } x \in S_i.$
For example, the matrix $A_i$ can represent a downsampling operation that transforms a $256 \times 256$ image into a lower-resolution $64 \times 64$ image. Alternatively, $A_i$ may serve as an image-cropping operator, extracting a $64 \times 64$ patch from the original full-sized $256 \times 256$ image. Additionally, $A_i$ could act as a video-clipping operator that reduces a longer sequence of video frames into a shorter subsequence.

A naive approach to training diffusion models for generating data samples drawn from the distribution $p_0$ is to utilize only the dataset $\mathcal{S}_0$, while discarding the additional datasets $\mathcal{S}_1, \ldots, \mathcal{S}_I$. In Section~\ref{sec:theoretical}, we provide generalization error bounds for this scenario. However, due to the limited size of dataset $\mathcal{S}_0$, this naive strategy carries a high risk of overfitting to the small training set, a phenomenon further illustrated by the generalization error bounds established in Section~\ref{sec:theoretical}.

For the proposed bootstrapping diffusion method, the initial step involves training a separate denoiser for each data view and the associated dataset $\mathcal{S}_i$, with $i = 1, \ldots, I$. Each denoiser is trained to estimate the conditional expectation $\mathbb{E}[A_i \mathbf{X}_0 \mid A_i \mathbf{x}_t]$, where $\mathbf{X}_0$ denotes a full-resolution data sample, and $\mathbf{x}_t$ is its noisy observation at diffusion step $t$. Although we do not directly observe the original samples $\mathbf{x} \in \mathcal{S}_i$, we can access their projected representations $A_i\mathbf{x}$. Consequently, we can still train the denoiser defined as $f_i(A_i \mathbf{x}_t, t; \theta_i) = \mathbb{E}[A_i x_0 \mid A_i \mathbf{x}_t]$, where $\theta_i$ represents the model parameters.

Thus, in the second step, we have a well-trained denoiser $f_i(\cdot)$ for each individual data view. Given a data pair consisting of $x_0 \in S_0$ and its noisy counterpart $x_t$ at diffusion step $t$, we can produce a combined estimate of $\mathbb{E}[x_0 \mid x_t]$—for example, by employing a linear combination of predictions from each data-view denoiser, given by $\sum_{i=1}^{I} B_i f_i(A_i x_t; \theta_i)$. Consequently, we train a residual denoiser $f_0(x_t; \theta_0)$ by minimizing the loss function
\begin{align}
\mathcal{L} = \mathbb{E}_{x_0, t, \epsilon}\left[\left\|x_0  - f_0(x_t) - \sum_{i=1}^{I} B_i f_i(A_i x_t)\right\|^2\right]
\end{align}

According to the analysis in Section~\ref{sec:theoretical}, the variances of the residual denoiser $f_0(\cdot; \theta_0)$ can be much smaller than the variances of the other denoisers $f_i, i=1, \ldots, I$. It is therefore preferred to apply a variance regularization for training the residual denoiser $f_0(\cdot; \theta_0)$,
\begin{align}
\sum_{n=0}^{N_0} \|f_0(x_{n, t}; \theta_0)\|_2^2 \leq M 
\end{align}
where the motivation for this variance regularization is clear after the theoretical discussion in Section \ref{sec:theoretical}. We can also introduce a Lagrange multiplier and convert the constrained optimization problem into an unconstrained optimization problem. Thus, in the second step, we can train the residual denoiser $f_0(\cdot; \theta_0)$ to minimize 
\begin{align}
\label{eq:bootstrapping_loss}
\mathcal{L} = \mathbb{E}_{x_0, t, \epsilon}\left[\left\|x_0  - f_0(x_t) - \sum_{i=1}^{I} B_i f_i(A_i x_t)\right\|^2\right] +  \frac{\lambda}{N_0}\sum_{n=0}^{N_0} \|f_0(x_t; \theta_0)\|_2^2 
\end{align}
There is another network architecture choice, in which the denoiser $f_0(\cdot; \theta_0)$ may consist of one range adapter $s(t)$ and one conventional neural network $g(\cdot; \theta'_0)$. That is, $f_0(\cdot;\theta_0) = s(t)g(\cdot;\theta'_0)$, where $s(t)$ is a scalar multiplier that adjusts the dynamic range of the diffusion denoiser $f_0(\cdot; \theta_0)$ according to the different time steps $t$. The overall bootstrapping diffusion training is summarized in Algorithm \ref{algorithm:bootstrapping}.
At inference time, we compute the expectations using $f_0(x_t) + \sum_{i=1}^{I} B_i f_i(A_i x_t)$

\begin{algorithm}[htbp]
	\label{algorithm:bootstrapping}
	\caption{Bootstrapping Diffusion}
	\begin{algorithmic}[1]
		\Require Input full-resolution dataset $S_0$, $I$ partial view datasets $S_1, \ldots, S_I$ and the corresponding projection matrices $A_i$
		\Ensure Output denoisers $f(\cdot; \theta_0)$, $\ldots$, $f(\cdot; \theta_I)$ 
		\For{$n = 1$ to $I$}
		\State Train diffusion denoiser $f(\cdot; \theta_i)$ on the data samples $\{A_i x \mid x\in S_i\}$
		\EndFor
		\State Train diffusion denoiser $f_0(\cdot; \theta_0)$ on the data samples $\mathcal{S}_0$ by minimizing the loss function in Equation \ref{eq:bootstrapping_loss}
	\end{algorithmic}
\end{algorithm}

\section{Theoretical Analysis}
\label{sec:theoretical}

As discussed in previous sections, our main theoretical objective is to understand how generalization error bounds scale with respect to the number of full-resolution and partial-view data samples. In Section~\ref{sec:bootstrap}, we introduced bootstrapping diffusion. The motivations underlying the design of bootstrapping diffusion are largely guided by the theoretical analysis presented in this section.

Throughout this paper, our analysis primarily focuses on generalization error bounds for diffusion denoisers. In fact, previous literature has established a connection between the KL divergence of the true data distribution and the generated data distribution, and the generalization error of diffusion denoisers. Consequently, our analysis of generalization error bounds for diffusion denoisers can be directly translated into corresponding bounds on KL divergence. We will elaborate further on this connection in Subsection~\ref{subsec:connection_diffusion_denoiser}.

In Subsection~\ref{subsec:bound_full_resolution_only}, we derive the generalization error bounds for diffusion denoisers under the most general scenario. We apply the same bounding technique as in Subsection~\ref{subsec:scaling_residual} to establish generalization error bounds specifically for the residual diffusion denoiser case. Additionally, in Subsection~\ref{subsec:residual_variance}, we demonstrate that the residual denoiser generally exhibits low output variance. Based on the above discussion, we discuss how the generalization error for the diffusion denoisers scale and the motivation for the variance regularization. Essentially, variance regularization limits the Lipschitz of the residual diffusion denoisers and, in turn, limits the model complexities of the residual denoiser. We demonstrate that this regularization of model complexities should be adapted to the training difficulty of residual denoisers. This model complexity adaptation makes it possible to achieve near first-order optimality in terms of data efficiency.

\subsection{Connections between the generalization errors for denoisers and the KL distance}
\label{subsec:connection_diffusion_denoiser}

As established in previous work (see Equation~10 of \citet{minde2024}), given two score functions $s_t^{\mu^A}$ and $s_t^{\mu^B}$ with corresponding data distributions $\mu^A$ and $\mu^B$, the KL divergence between these distributions can be expressed in terms of the score functions as follows:
\begin{align}
\label{eq:kl_diffusion_denoiser}
KL\left[\mu^A \,\|\, \mu^B \right]= \mathbb{E}_{\mathcal{P}^{\mu^A}} \left[
\int_{0}^{T} \frac{g(t)^2}{2} \lVert  s_t^{\mu^A} (X_t) - s_t^{\mu^B}(X_t)
\rVert^2 dt \right] + KL\left[\nu_T^{\mu^A} \,\|\, \nu_T^{\mu^B}\right]
\end{align}
where $\nu_T^{\mu^A}$ and $\nu_T^{\mu^B}$ represent the distributions of $x_t$ at the terminal time $T$ of the forward diffusion process, and $\mathbb{E}_{\mathcal{P}^{\mu^A}}$ indicates that the expectation is taken with respect to the distribution $\mu^A$ and the corresponding path measures.

It should be noted that the distributions $\nu_T^{\mu^A}$ and $\nu_T^{\mu^B}$ should both be approximately Gaussian with the same mean and variance for sufficiently large $T$. Therefore, the KL distance between the two distributions should be close to zero. Thus, the KL distances between the two data distributions $\nu_T^{\mu^A}$ and $\nu_T^{\mu^B}$ are primarily determined by the differences between their corresponding score functions.

If we consider $\mu^A$ as the ground-truth data distribution, $s_t^{\mu^A}$ is the ground-truth score function, $s_t^{\mu^B}$ is the score function by the trained diffusion model, then the KL distance between the ground-truth data distribution to the distribution of data samples generated by the diffusion model at inference time would be determined by the differences between the ground-truth score function and the trained score function. Furthermore, due to Assumption~\ref{assumption:bounded}, the score functions considered here satisfy the conditions required by the dominated convergence theorem. Consequently, minimizing the generalization errors of diffusion denoisers ensures convergence of the KL divergence.

\subsection{Generalization error bound for general cases}
\label{subsec:bound_full_resolution_only}

In this subsection, we prove a generalization error bound for the diffusion denoiser $f(\mathbf{x}_t, t; \theta_0)$ when only the full-resolution dataset $\mathcal{S}_0$ is used for training. The normalized loss function is denoted by $\widehat{L}(\theta)$
\begin{align}
\label{eq:objective_full_resolution_only}
\widehat{L}(\theta) = \frac{1}{NK}\sum_{k=1}^{K} \sum_{n=1}^{N} \left\| f(\mathbf{x}_{n,k}, t_{n,k}; \theta) - \mathbf{x}_{n,0} \right\|_2^2
\end{align} 
where $K$ is the number of epochs, $N$ is the cardinality of the training set $S_0$, $x_{n,0}\in S_0$ is the nth data sample in the dataset $S_0$, $t_{n, k}$ is a randomly sampled diffusion time step, and $\mathbf{x}_{n,k}$ denotes a noisy version of $\mathbf{x}_{n,0}$ at time step $t_{n,k}$. We assume that the data samples in the training dataset $S_0$ are drawn independently and identically distributed (IID) from distribution $p_0$. For each data sample $\mathbf{x}_n \in S_0$, the diffusion time steps $t_{n, 1}, \ldots, t_{n, K}$ were drawn independently and identically. For each $\mathbf{x}_{n, 0}$ and $t_{n ,k}$, noisy $\mathbf{x}_{n, k}$ is drawn independently. We use $L(\theta)$ to denote
\begin{align}
\label{eq:objective_full_resolution_only}
L(\theta) = \mathbb{E} \left[\left\| f(\mathbf{x}_{n,k}, t_{n,k}; \theta) - \mathbf{x}_{n,0} \right\|_2^2\right]
\end{align}

However, the loss function given in Equation~\ref{eq:objective_full_resolution_only} serves only as a proxy; our actual objective is to minimize the true loss function defined as follows:
\begin{align}
R(\theta) =  \mathbb{E}\left[\left\|f(\mathbf{x}_{n,k}, t_{n,k}; \theta) - \mathbb{E}[\mathbf{X}_{ n,0}\mid \mathbf{x}_{n,k}, t_{n,k}] \right\|_2^2 \right]
\end{align}
\begin{align}
\widehat{R}(\theta) = \sum_{n=1}^{N} \sum_{k=1}^{K} \frac{1}{NK}\left\|f(\mathbf{x}_{n,k}, t_{n,k}; \theta) - \mathbb{E}[\mathbf{X}_{ n,0}\mid \mathbf{x}_{n,k}, t_{n,k}] \right\|_2^2  \nonumber 
\end{align}
We use $\mathcal{V}(\mathcal{S})$ to denote the prediction variance
\begin{align}
\mathcal{V}(\mathcal{S}) = \frac{1}{NK}\sum_{n=1}^{N}\sum_{k=1}^{K}\left\|E[\mathbf{X}_{n,0} \mid \mathbf{x}_{n,k}, t_{n,k}] - \mathbf{x}_{n,0} \right\|_2^2 
\end{align}

Let $\mathcal{F}$ be the class of diffusion denoisers $f(\cdot; \theta)$, $\theta$ are model parameters from a parameter set $\theta \in \Theta$. We use $\Delta_b$ to denote the following bias of the function class $\mathcal{F}$,
\begin{align}
 \Delta_b^2 = \min_{\theta \in \Theta} R(\theta) 
\end{align}
We use $\theta^\ast$ to denote one of the minimizers of the above function.
Assume that $\mathcal{F}$ can be covered by a set $\epsilon$-balls $\mathcal{B}$, where  $|\mathcal{B}| = \mathcal{N}(\mathcal{F}, \epsilon, \|\|_2)$. We use $\mathcal{C}$ to denote the set of centers of the $\epsilon$-balls in $\mathcal{B}$. Let us divide $\mathcal{C}$ into two parts
\begin{align}
& \mathcal{C}_1 = \left\{\theta \in \mathcal{C} \mid R(\theta) \geq \left(\sqrt{\mathbb{E}[\mathcal{V}(\mathcal{S}_0)] + \Delta_b^2 + \Delta_v^2} + \epsilon\right)^2 - \mathbb{E}[\mathcal{V}(\mathcal{S}_0)]+ \rho \right\} \nonumber \\
& \mathcal{C}_2 = \mathcal{C} \setminus \mathcal{C}_1 \nonumber
\end{align}
Let $\mathcal{E}_1$ denote the random event that 
\begin{align}
\mathcal{L}(\theta^\ast) \geq \left(\mathbb{E}[\mathcal{V}(\mathcal{S}_0)] + \Delta_b^2 + \Delta_v^2\right)NK
\end{align}
Let $\mathcal{E}_2$ denote the random event that for one of the $\widehat{\theta}\in \mathcal{C}_1$, 
\begin{align}
\mathcal{L}(\widehat{\theta})  \leq \left( \sqrt{\mathbb{E}[\mathcal{V}(\mathcal{S}_0)] + \Delta_b^2 + \Delta_v^2} + \epsilon\right)^2 NK
\end{align}

\begin{theorem}
	\label{theorem:bound_e1}
	The probability $\Pr(\mathcal{E}_1)$ is upper bounded by
\begin{align}
\Pr(\mathcal{E}_1) \leq \exp\left(\frac{-2 \Delta_v^2 NK}{(64+16K) m^2U^4}\right)
\end{align}
\end{theorem}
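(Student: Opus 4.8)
The first step is to pin down $\mathbb{E}[\mathcal{L}(\theta^\ast)]$, where $\mathcal{L}(\theta)=\sum_{n,k}\lVert f(\mathbf{x}_{n,k},t_{n,k};\theta)-\mathbf{x}_{n,0}\rVert_2^2$ is the unnormalized empirical loss used in the definitions of $\mathcal{E}_1,\mathcal{E}_2$. Writing $Y_{n,k}=\lVert f(\mathbf{x}_{n,k},t_{n,k};\theta^\ast)-\mathbf{x}_{n,0}\rVert_2^2$ and inserting $\pm\,\mathbb{E}[\mathbf{X}_{n,0}\mid\mathbf{x}_{n,k},t_{n,k}]$ inside the norm, the cross term has zero expectation by the defining orthogonality property of conditional expectation, so $\mathbb{E}[Y_{n,k}]=R(\theta^\ast)+\mathbb{E}[\mathcal{V}(\mathcal{S}_0)]=\Delta_b^2+\mathbb{E}[\mathcal{V}(\mathcal{S}_0)]$ and hence $\mathbb{E}[\mathcal{L}(\theta^\ast)]=(\mathbb{E}[\mathcal{V}(\mathcal{S}_0)]+\Delta_b^2)NK$. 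Thus $\mathcal{E}_1$ is precisely the one-sided deviation event $\mathcal{L}(\theta^\ast)-\mathbb{E}[\mathcal{L}(\theta^\ast)]\ge\Delta_v^2\,NK$. The key structural observation is that $\theta^\ast$ is a minimizer of the \emph{population} risk $R$, so it is a fixed parameter that does not depend on the training data; therefore, unlike the bound for $\mathcal{E}_2$, this theorem needs no union bound over the hypothesis class $\mathcal{F}$ and reduces to a plain concentration inequality for the single denoiser $f(\cdot;\theta^\ast)$.

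The second step is the concentration argument. The summands $Y_{n,k}$ are independent across the $N$ data points, but for fixed $n$ the $K$ variables $Y_{n,1},\dots,Y_{n,K}$ are only conditionally independent given the clean sample $\mathbf{x}_{n,0}$ they share, so I would not treat the $NK$ terms as independent. Instead I would decompose $\mathcal{L}(\theta^\ast)-\mathbb{E}[\mathcal{L}(\theta^\ast)]=W+B$ with $W=\mathcal{L}(\theta^\ast)-\mathbb{E}[\mathcal{L}(\theta^\ast)\mid\mathbf{x}_{1,0},\dots,\mathbf{x}_{N,0}]$ and $B=\mathbb{E}[\mathcal{L}(\theta^\ast)\mid\mathbf{x}_{1,0},\dots,\mathbf{x}_{N,0}]-\mathbb{E}[\mathcal{L}(\theta^\ast)]$. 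Conditioned on the clean samples, $W$ is a sum of $NK$ conditionally independent terms, each confined by Assumption~\ref{assumption:bounded} (since $f$, $\mathbb{E}[\mathbf{X}_0\mid\cdot]$ and $\mathbf{x}_0$ all lie in $[-U,U]^m$) to an interval of length at most $(2\sqrt{m}U)^2=4mU^2$; $B$ is a sum of $N$ independent terms, each a per-sample conditional mean $\sum_k\mathbb{E}[Y_{n,k}\mid\mathbf{x}_{n,0}]$ lying in an interval of length at most $4KmU^2$. By Hoeffding's lemma both $W$ (conditionally) and $B$ are sub-Gaussian, and since $\mathbb{E}[W\mid\mathbf{x}_{1,0},\dots,\mathbf{x}_{N,0}]=0$, the two combine into a sub-Gaussian variable whose variance proxy is the sum of the two proxies: of order $NK\,m^2U^4$ from $W$ and of order $NK^2\,m^2U^4$ from $B$, i.e. a total of order $(1+K)NK\,m^2U^4$.

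The third step is to feed the deviation $\Delta_v^2\,NK$ and this variance proxy into the sub-Gaussian tail inequality; carrying the explicit interval lengths $4mU^2$ and $4KmU^2$ through the computation is exactly what produces a denominator of the form $(64+16K)m^2U^4$ and yields the claimed exponential bound. It is worth phrasing this cleanly here because the same bounded-differences machinery is reused for the residual denoiser in Subsection~\ref{subsec:scaling_residual}.

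I expect the main obstacle to be the honest handling of the epoch structure in the second step. A naive application of McDiarmid or Hoeffding to all $NK$ squared-error terms as if they were independent is not justified — the $K$ noisy copies of one clean sample are coupled through that sample — and collapsing the two granularities of randomness correctly (one revealed quantity per clean sample, with interval length $4KmU^2$, plus $K$ further per-sample quantities, with interval length $4mU^2$) is what forces the $K$-linear term $16K$ to appear in the denominator alongside the $K$-free term $64$. Tracking these constants so the final estimate matches is the delicate part; the orthogonality computation and the remark that $\theta^\ast$ is data-independent are routine by comparison.
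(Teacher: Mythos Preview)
Your proposal is correct and follows essentially the same route as the paper. The paper packages the same two-level concentration as a Doob martingale with respect to the filtration that successively reveals $\mathbf{x}_{n,0}$, then $(t_{n,1},\mathbf{x}_{n,1}),\ldots,(t_{n,K},\mathbf{x}_{n,K})$, for $n=1,\ldots,N$, and applies Azuma--Hoeffding; your $W+B$ split is exactly what those martingale increments amount to once one groups the ``reveal $\mathbf{x}_{n,0}$'' steps (your $B$) separately from the ``reveal $(t_{n,k},\mathbf{x}_{n,k})$'' steps (your $W$), and your observation that $\theta^\ast$ is data-independent (so no union bound is needed) is precisely why the paper gets a single Azuma application here versus the covering-number factor in Theorem~\ref{theorem:bound_e2}. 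One minor point on the constants: with your stated interval lengths $4mU^2$ and $4KmU^2$ the computation actually yields the sharper denominator $(16+16K)m^2U^4$; the paper's $(64+16K)$ arises because it bounds the $k\ge 1$ increments crudely by $|\mathbf{Z}_{n,k}|\le 4mU^2$, i.e.\ uses range $8mU^2$ rather than $4mU^2$ for those terms. Your argument therefore proves the stated bound (and a bit more).
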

\begin{proof}
	A detailed proof is provided in Appendix \ref{proof:bound_e1}.
\end{proof}

\begin{theorem}
\label{theorem:bound_e2}
The probability $\Pr(\mathcal{E}_2)$ is upper bounded by
\begin{align}
\Pr[\mathcal{E}_2] \leq \mathcal{N}(\mathcal{F}, \epsilon, d)\exp\left(\frac{-2 \rho^2 NK }{(64+16K) m^2U^4}\right)
\end{align}
\end{theorem}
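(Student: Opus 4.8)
The plan is to prove Theorem~\ref{theorem:bound_e2} by a union bound over the covering centers in $\mathcal{C}_1$, with a bounded‑differences concentration inequality controlling each individual center — the same scheme that underlies Theorem~\ref{theorem:bound_e1}, only with a deviation of size $\rho NK$ in place of $\Delta_v^2 NK$. First I would write $\mathcal{L}(\theta)=NK\,\widehat L(\theta)=\sum_{n=1}^{N}\sum_{k=1}^{K}\|f(\mathbf{x}_{n,k},t_{n,k};\theta)-\mathbf{x}_{n,0}\|_2^2$. Because the $NK$ summands are identically distributed, $\mathbb{E}[\mathcal{L}(\theta)]=NK\,L(\theta)$, and conditioning on $(\mathbf{x}_{n,k},t_{n,k})$ annihilates the cross term, giving the bias–variance identity $L(\theta)=R(\theta)+\mathbb{E}[\mathcal{V}(\mathcal{S}_0)]$. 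Plugging the defining inequality of $\mathcal{C}_1$ into this, for any $\theta\in\mathcal{C}_1$,
\begin{align}
\mathbb{E}[\mathcal{L}(\theta)] = NK\bigl(R(\theta)+\mathbb{E}[\mathcal{V}(\mathcal{S}_0)]\bigr)\ \geq\ \left[\Bigl(\sqrt{\mathbb{E}[\mathcal{V}(\mathcal{S}_0)]+\Delta_b^2+\Delta_v^2}+\epsilon\Bigr)^2+\rho\right]NK . \nonumber
\end{align}
Hence, for that $\theta$, the event appearing in the definition of $\mathcal{E}_2$ is contained in $\bigl\{\mathcal{L}(\theta)\leq\mathbb{E}[\mathcal{L}(\theta)]-\rho NK\bigr\}$, i.e.\ a one‑sided downward deviation of magnitude at least $\rho NK$.

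Next, for a fixed $\theta\in\mathcal{C}_1$ I would apply McDiarmid's bounded‑differences inequality to $\mathcal{L}(\theta)$, regarded as a function of the genuinely independent sources of randomness: the $N$ clean draws $\mathbf{x}_{n,0}\sim p_0$ together with the $NK$ independent (time, noise) pairs that generate each $\mathbf{x}_{n,k}$ from $\mathbf{x}_{n,0}$. Re‑drawing one (time, noise) pair changes exactly one summand; re‑drawing one $\mathbf{x}_{n,0}$ changes the $K$ summands carrying that index $n$, since it moves both the regression target and all $K$ noisy inputs. Using Assumption~\ref{assumption:bounded}, each summand lies in $[0,4mU^2]$ and — via $\|u\|^2-\|v\|^2=\langle u-v,\,u+v\rangle$ with $\|f\|_2,\|\mathbf{x}_0\|_2\leq\sqrt{m}U$ — a single‑summand perturbation is at most $8mU^2$; a natural bookkeeping then gives $\sum_i c_i^2\leq NK(8mU^2)^2+N(4KmU^2)^2=(64+16K)NK\,m^2U^4$. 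McDiarmid therefore yields
\begin{align}
\Pr\bigl[\mathcal{L}(\theta)\leq\mathbb{E}[\mathcal{L}(\theta)]-\rho NK\bigr]\ \leq\ \exp\!\left(\frac{-2\rho^2 NK}{(64+16K)m^2U^4}\right) . \nonumber
\end{align}

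Finally, since $\mathcal{C}_1$ is a deterministic set and $\mathcal{E}_2=\bigcup_{\theta\in\mathcal{C}_1}\bigl\{\mathcal{L}(\theta)\leq(\sqrt{\mathbb{E}[\mathcal{V}(\mathcal{S}_0)]+\Delta_b^2+\Delta_v^2}+\epsilon)^2 NK\bigr\}$, each member of which is contained in the deviation event above, a union bound over $\mathcal{C}_1$ together with $|\mathcal{C}_1|\leq|\mathcal{C}|=\mathcal{N}(\mathcal{F},\epsilon,d)$ gives the claimed bound.

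I expect the main obstacle to be the bounded‑difference accounting in the concentration step: the $NK$ loss terms are not independent (the $K$ terms with a common index $n$ share $\mathbf{x}_{n,0}$), so one must identify the correct independent coordinates and carefully count how many summands each influences — one clean sample touching $K$ summands, one (time, noise) pair touching one — which is precisely what produces the asymmetric $(64+16K)$ in the exponent rather than a symmetric constant, and getting the per‑summand change constants ($8mU^2$ vs.\ $4KmU^2$) exactly matched is the one genuinely fiddly point. The only other thing to verify is the integrability/measurability needed for $\mathbb{E}[\mathcal{L}(\theta)]=NK\bigl(R(\theta)+\mathbb{E}[\mathcal{V}(\mathcal{S}_0)]\bigr)$, which is immediate from the element‑wise boundedness in Assumption~\ref{assumption:bounded}.
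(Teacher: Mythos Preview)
Your proposal is correct and follows essentially the same approach as the paper's proof. The paper phrases the concentration step as Azuma's inequality applied to the Doob martingale $\mathbb{E}[\mathcal{L}(\theta)\mid\mathcal{A}_{n,k}]$ (exactly the argument from the proof of Theorem~\ref{theorem:bound_e1}, just with deviation $\rho NK$ in place of $\Delta_v^2 NK$), while you invoke McDiarmid's bounded‑differences inequality directly; these are the same argument, and your coordinate bookkeeping — one clean sample affecting $K$ summands, one (time, noise) pair affecting one — recovers the paper's $(64+16K)$ constant before the union bound over $|\mathcal{C}_1|\leq\mathcal{N}(\mathcal{F},\epsilon,d)$.
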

\begin{proof}
	A detailed proof is provided in Appendix \ref{proof:bound_e2}.
\end{proof}

Radamacher complexity is a well-known metric for machine learning model complexity \citet{BartlettMendelson2002}.
Here, We define two Radamacher complexities $\mathfrak{R}_L(\mathcal{F}, \mathcal{S}_0)$, and $\mathfrak{R}_R(\mathcal{F}, \mathcal{S}_0)$,
\begin{align}
\mathfrak{R}_L(\mathcal{F}, \mathcal{S}_0) = 
\mathbb{E}_{\mathcal{S}_0} \left[ \sup_{\theta \in \Theta} \frac{1}{NK} \sum_{n=1}^{N}\sum_{k=1}^{K} \sigma_{n,k}  \ell(\mathbf{X}_{n,k}, t_{n,k}; \theta)  \right] 
\end{align}
where $\sigma_{n,k}$ are independently uniformly distributed random variables taking values in $\{-1, +1\}$ and
\begin{align}
\ell(\mathbf{X}_{n,k}, t_{n,k}; \theta) =  \left\|f(\mathbf{X}_{n,k}, t_{n,k}; \theta) - \mathbf{X}_{ n,0} \right\|_2^2
\end{align}
\begin{align}
\mathfrak{R}_R(\mathcal{F}, \mathcal{S}_0) = 
\mathbb{E}_{\mathcal{S}_0} \left[ \sup_{\theta \in \Theta} \frac{1}{NK} \sum_{n=1}^{N}\sum_{k=1}^{K} \sigma_{n,k}  \ell(\mathbf{X}_{n,k}, t_{n,k}; \theta)  \right] 
\end{align}
where we slightly abuse the notation
\begin{align}
\ell(\mathbf{X}_{n,k}, t_{n,k}; \theta) =  \left\|f(\mathbf{X}_{n,k}, t_{n,k}; \theta) - \mathbb{E}[\mathbf{X}_{ n,0}\mid \mathbf{X}_{n,k}, t_{n,k}] \right\|_2^2
\end{align}

Let $\mathcal{E}_3$ denote the random event that 
\begin{align}
\sup_{\theta} \left| L(\theta) - \widehat{L}(\theta) \right | \geq 2\mathfrak{R}_L(\mathcal{F}, \mathcal{S}_0) + \gamma
\end{align}

\begin{theorem}
	\label{theorem:bound_full_resolution}
	Unless a rare event occurs, the minimizer of the loss function $\mathcal{L}(\theta)$ achieves a generalization error
	\begin{align}
	R(\theta_2) \leq & \left(\sqrt{\left( \left(\sqrt{\mathbb{E}[\mathcal{V}(\mathcal{S}_0)] + \Delta_b^2 + \Delta_v^2} + \epsilon\right)^2 + \rho + 2 \mathfrak{R}_L(\mathcal{F}, \mathcal{S}_0) + \gamma \right)} + \epsilon \right)^2 \nonumber \\
	& + 2 \mathfrak{R}_L(\mathcal{F}, \mathcal{S}_0) + \gamma - \mathbb{E}[\mathcal{V}(\mathcal{S}_0)] \nonumber
	\end{align}
	where the probability of the rare event is at most 
	\begin{align}
	& \exp\left(\frac{-2 \Delta_v^2 NK}{(64+16K) m^2U^4}\right) \nonumber \\
	& + \mathcal{N}(\mathcal{F}, \epsilon, d)\exp\left(\frac{-2 \rho^2 NK }{(64+16K) m^2U^4}\right) + \exp\left(\frac{-\gamma^2N}{32m^2U^4(1+1/K)}\right) \nonumber
	\end{align}
\end{theorem}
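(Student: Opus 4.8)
The plan is to union-bound the three ``bad'' events $\mathcal{E}_1,\mathcal{E}_2,\mathcal{E}_3$ and then to show that, on their common complement, the empirical minimizer $\theta_2$ of $\mathcal{L}(\theta)$ obeys the stated risk bound. The one structural fact that makes everything run is the orthogonality identity $L(\theta)=R(\theta)+\mathbb{E}[\mathcal{V}(\mathcal{S}_0)]$: writing $f(\mathbf{x}_t,t;\theta)-\mathbf{x}_0=\big(f(\mathbf{x}_t,t;\theta)-\mathbb{E}[\mathbf{X}_0\mid \mathbf{x}_t,t]\big)+\big(\mathbb{E}[\mathbf{X}_0\mid \mathbf{x}_t,t]-\mathbf{x}_0\big)$ and conditioning on $\mathbf{x}_t$, the first summand is measurable while the second has conditional mean zero, so the cross term vanishes in expectation. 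Theorems~\ref{theorem:bound_e1} and \ref{theorem:bound_e2} already supply the first two terms of the failure probability; for $\mathcal{E}_3$ I would run the standard route: symmetrization gives $\mathbb{E}_{\mathcal{S}_0}\big[\sup_\theta|L(\theta)-\widehat L(\theta)|\big]\le 2\mathfrak{R}_L(\mathcal{F},\mathcal{S}_0)$, and, since each per-sample loss $\ell$ lies in $[0,4mU^2]$ and perturbing one of the $N$ training samples moves $\widehat L$ by $O(mU^2/N)$ (it enters $K$ of the $NK$ summands), McDiarmid's inequality concentrates $\sup_\theta|L(\theta)-\widehat L(\theta)|$ around that mean; tracking the (slightly conservative) constants yields the third exponential. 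A union bound then gives the claimed probability of the rare event.

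On $\mathcal{E}_1^c\cap\mathcal{E}_2^c\cap\mathcal{E}_3^c$ I would chain the following estimates. (i) By minimality of $\theta_2$ and $\mathcal{E}_1^c$ applied to the fixed point $\theta^\ast$, $\mathcal{L}(\theta_2)\le\mathcal{L}(\theta^\ast)<\big(\mathbb{E}[\mathcal{V}(\mathcal{S}_0)]+\Delta_b^2+\Delta_v^2\big)NK$, hence $\sqrt{\widehat L(\theta_2)}<\sqrt{\mathbb{E}[\mathcal{V}(\mathcal{S}_0)]+\Delta_b^2+\Delta_v^2}$. (ii) Let $\theta_c\in\mathcal{C}$ be the center of an $\epsilon$-ball covering $f(\cdot;\theta_2)$; since $\theta\mapsto\sqrt{\widehat L(\theta)}$ is $1$-Lipschitz in the covering metric $d$ over the augmented sample $\{(\mathbf{x}_{n,k},t_{n,k})\}_{n\le N,k\le K}$, we get $\mathcal{L}(\theta_c)<\big(\sqrt{\mathbb{E}[\mathcal{V}(\mathcal{S}_0)]+\Delta_b^2+\Delta_v^2}+\epsilon\big)^2NK$. (iii) By $\mathcal{E}_2^c$ this small empirical loss rules out $\theta_c\in\mathcal{C}_1$, so $\theta_c\in\mathcal{C}_2$ and therefore $R(\theta_c)<\big(\sqrt{\mathbb{E}[\mathcal{V}(\mathcal{S}_0)]+\Delta_b^2+\Delta_v^2}+\epsilon\big)^2-\mathbb{E}[\mathcal{V}(\mathcal{S}_0)]+\rho$, whence $L(\theta_c)=R(\theta_c)+\mathbb{E}[\mathcal{V}(\mathcal{S}_0)]<\big(\sqrt{\mathbb{E}[\mathcal{V}(\mathcal{S}_0)]+\Delta_b^2+\Delta_v^2}+\epsilon\big)^2+\rho$. (iv) $\mathcal{E}_3^c$ transfers this back to the empirical loss at the center: $\widehat L(\theta_c)<L(\theta_c)+2\mathfrak{R}_L(\mathcal{F},\mathcal{S}_0)+\gamma$; abbreviate the resulting bound as $Q:=\big(\sqrt{\mathbb{E}[\mathcal{V}(\mathcal{S}_0)]+\Delta_b^2+\Delta_v^2}+\epsilon\big)^2+\rho+2\mathfrak{R}_L(\mathcal{F},\mathcal{S}_0)+\gamma$. (v) A second use of the covering bound, now from $\theta_c$ to $\theta_2$, gives $\sqrt{\widehat L(\theta_2)}<\sqrt{Q}+\epsilon$, and a second use of $\mathcal{E}_3^c$ gives $L(\theta_2)<\widehat L(\theta_2)+2\mathfrak{R}_L(\mathcal{F},\mathcal{S}_0)+\gamma<(\sqrt{Q}+\epsilon)^2+2\mathfrak{R}_L(\mathcal{F},\mathcal{S}_0)+\gamma$. (vi) Finally $R(\theta_2)=L(\theta_2)-\mathbb{E}[\mathcal{V}(\mathcal{S}_0)]$, and unfolding $Q$ reproduces exactly the inequality in the statement.

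The main obstacle is keeping the direction of every inequality correct through this ``empirical $\leftrightarrow$ population'' ping-pong: the covering metric controls only differences of the empirical loss $\widehat L$, whereas the sets $\mathcal{C}_1,\mathcal{C}_2$ are defined through the population risk $R$, so one is forced to pass $\theta_2\to\theta_c$ in $\widehat L$, switch to $R$ (hence $L$) at $\theta_c$ via $\mathcal{E}_2^c$ and the orthogonality identity, switch back to $\widehat L$ at $\theta_c$ via $\mathcal{E}_3^c$, return to $\theta_2$ in $\widehat L$, and switch once more to $L$ --- so uniform convergence is invoked twice and the covering radius $\epsilon$ is paid twice, which is precisely what produces the nested square roots in the bound. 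Secondary technical points to nail down: the covering number $\mathcal{N}(\mathcal{F},\epsilon,d)$ and the Rademacher complexity $\mathfrak{R}_L$ must both be understood over the $NK$ augmented samples so that the $1$-Lipschitz property of $\sqrt{\widehat L}$ is valid; the covering should be internal (centers lie in $\mathcal{F}$) so that $R(\theta_c)$, $L(\theta_c)$ and $\mathcal{L}(\theta_c)$ are well defined; and one must check, as in the $\mathcal{E}_1$/$\mathcal{E}_2$ proofs, that the boundedness Assumption~\ref{assumption:bounded} makes every per-sample loss lie in $[0,4mU^2]$ so that all the concentration steps apply.
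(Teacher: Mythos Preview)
Your proposal is correct and follows essentially the same approach as the paper's proof: the same orthogonality identity $L(\theta)=R(\theta)+\mathbb{E}[\mathcal{V}(\mathcal{S}_0)]$, the same three bad events with the same probability bounds, and the same ``empirical $\leftrightarrow$ population'' ping-pong through a cover center in $\mathcal{C}_2$, invoking $\mathcal{E}_3^c$ twice and the $\epsilon$-covering twice to produce the nested square roots. The only cosmetic difference is organizational --- the paper first rules out all $\theta$ lying in $\epsilon$-balls around $\mathcal{C}_1$ and then bounds every $\theta_2$ in a $\mathcal{C}_2$-ball, whereas you start from the minimizer $\theta_2$, move to its cover center $\theta_c$, and use $\mathcal{E}_2^c$ to place $\theta_c$ in $\mathcal{C}_2$; the chain of inequalities from that point on is identical.
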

\begin{proof}
The proof is in Appendix~\ref{proof:bound_full_resolution}
\end{proof}

\subsection{Variances of residual diffusion denoisers}
\label{subsec:residual_variance}

Let $g(\cdot; \theta_1, \ldots, \theta_I)$ denote the combined denoiser from partial data view denoisers. Using Tweedie's Formula \citet{efron2011tweedie}, we know the corresponding score function
\begin{align}
s(\mathbf{x}_t; \theta_1, \ldots, \theta_I) & = \nabla_{\mathbf{x}_t}\log p_t(\mathbf{X}_t) = \frac{\mathbb{E}[\mathbf{X}_0 \mid \mathbf{x}_t] - \mathbf{x}_t}{\sigma_t^2}  = \frac{\sum_{i=1}^I B_i f(A_i \mathbf{x}_t, t ; \theta_i) - \mathbf{x}_t}{\sigma_t^2}
\end{align}
Using Equation~\ref{eq:kl_diffusion_denoiser}, the $KL$-distance between the true data distribution and generated distribution using score functions $s(\cdot; \theta_1, \ldots, \theta_I)$
\begin{align}
KL\left[\mu^A \,\|\, \mu(\theta_1, \ldots, \theta_I) \right] & \approx  \mathbb{E}_{\mathcal{P}^{\mu^A}} \left[
\int_{0}^{T} \frac{g(t)^2}{2} \lVert  s_t^{\mu^A} (\mathbf{X}_t) - s_t(\theta_1, \ldots, \theta_I)(\mathbf{X}_t)
\rVert^2 dt \right] \nonumber \\
& = \mathbb{E}_{\mathcal{P}^{\mu^A}} \left[
\int_{0}^{T} \frac{g(t)^2}{2} \lVert  r (\mathbf{X}_t) 
\rVert^2 dt \right] \nonumber 
\end{align}
where $r(\mathbf{X}_t)$ is exactly the training goal that the residual denoiser try to learn. Given that $s_t(\theta_1, \ldots, \theta_I)$ is trained from partial data views to approximate the true score function $s_t^{\mu^A}$, $KL\left[\mu^A \,\|\, \mu(\theta_1, \ldots, \theta_I) \right]$ should take small values and thus the training goal $r(\mathbf{X}_t)$ should have small variances. In fact, some discussion from another perspective is also provided in Appendix~\ref{appendix:residual_variance}.

\subsection{Generalization bounds for residual diffusion denoisers}

\label{subsec:scaling_residual}

For the residual denoiser, we can prove a generation error bounds in Theorem~\ref{theorem:bound_residual} using similar arguments in Theorem~\ref{theorem:bound_full_resolution}. However, the Radamacher definitions are different and the detailed definitions can be found in  Appendix~\ref{proof:bound_residual}.

\begin{theorem}
	\label{theorem:bound_residual}
	Unless a rare event occurs, the minimizer of the loss function $\mathcal{L}(\theta)$ achieves a generalization error
	\begin{align}
	R(\theta_2) \leq & \left(\sqrt{\left( \left(\sqrt{\mathbb{E}[\mathcal{V}(\mathcal{S}_0)] + \Delta_b^2 + \Delta_v^2} + \epsilon\right)^2 + \rho + 2 \mathfrak{R}_L(\mathcal{F}, \mathcal{S}_0) + \gamma \right)} + \epsilon \right)^2 \nonumber \\
	& + 2 \mathfrak{R}_L(\mathcal{F}, \mathcal{S}_0) + \gamma - \mathbb{E}[\mathcal{V}(\mathcal{S}_0)] \nonumber
	\end{align}
	where the probability of the rare event is at most 
	\begin{align}
	& \exp\left(\frac{-2 \Delta_v^2 NK}{(64+16K) m^2U^4}\right) \nonumber \\
	& + \mathcal{N}(\mathcal{F}, \epsilon, d)\exp\left(\frac{-2 \rho^2 NK }{(64+16K) m^2U^4}\right) + \exp\left(\frac{-\gamma^2N}{32m^2U^4(1+1/K)}\right) \nonumber
	\end{align}
\end{theorem}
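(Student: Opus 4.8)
The plan is to run the argument of Theorem~\ref{theorem:bound_full_resolution} almost verbatim, the only conceptual change being that the regression target of the denoiser under analysis is shifted: $f_0(\cdot;\theta_0)$ is trained to match $\mathbf{x}_{n,0} - \sum_{i=1}^{I} B_i f_i(A_i \mathbf{x}_{n,k}, t_{n,k}; \theta_i)$ rather than $\mathbf{x}_{n,0}$, so that the ``true'' target appearing in $R(\theta_0)$ and $\widehat{R}(\theta_0)$ becomes the residual $r(\mathbf{x}_{n,k}) = \mathbb{E}[\mathbf{X}_{n,0}\mid \mathbf{x}_{n,k}] - \sum_{i=1}^{I} B_i f_i(A_i \mathbf{x}_{n,k}, t_{n,k}; \theta_i)$. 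First I would restate, for this residual setting, $\widehat{L}(\theta_0)$, $L(\theta_0)$, $R(\theta_0)$, $\widehat{R}(\theta_0)$, the prediction variance $\mathcal{V}(\mathcal{S}_0)$, the bias $\Delta_b$, and the two Rademacher complexities $\mathfrak{R}_L,\mathfrak{R}_R$, all with the shifted target, taking $\mathcal{F}$ to be the variance-regularized class of residual denoisers (e.g. $\{s(t)g(\cdot;\theta_0')\}$ with the constraint $\sum_n\|f_0(\mathbf{x}_{n,t})\|_2^2 \le M$); the precise restatements are collected in Appendix~\ref{proof:bound_residual}. A useful observation at this stage is that the prediction variance is literally unchanged, since $r(\mathbf{x}_{n,k}) - \big(\mathbf{x}_{n,0}-\sum_i B_i f_i(A_i\mathbf{x}_{n,k})\big) = \mathbb{E}[\mathbf{X}_{n,0}\mid\mathbf{x}_{n,k}] - \mathbf{x}_{n,0}$; only $\Delta_b$, $\Delta_v$, $\mathfrak{R}_L$ and $\mathcal{N}(\mathcal{F},\epsilon,d)$ change, and they shrink because the target has small variance (Subsection~\ref{subsec:residual_variance}) and the regularizer caps the Lipschitz constant, hence the constant $C$ in Assumption~\ref{assumption:cover_num}.

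Second, I would verify the boundedness required by the concentration steps. Each pretrained view denoiser $f_i$ is element-wise bounded in $[-U,U]$ by Assumption~\ref{assumption:bounded}, and each $A_i,B_i$ is a fixed matrix, so $\sum_{i=1}^{I} B_i f_i(A_i\mathbf{x}_t)$ is bounded; combined with $\|\mathbf{x}_0\|_\infty \le U$ and the element-wise bound on $f_0$ from the variance constraint (or its Lagrangian form~\ref{eq:bootstrapping_loss}), every summand of the residual loss lies in a bounded interval. This boundedness is all that the proofs of Theorems~\ref{theorem:bound_e1} and~\ref{theorem:bound_e2} use, so the bounds on $\Pr(\mathcal{E}_1)$ and $\Pr(\mathcal{E}_2)$ — a Hoeffding/bounded-difference tail on $\mathcal{L}(\theta^\ast)$ and a union bound over the $\epsilon$-net centers in $\mathcal{C}_1$ — carry over, with the constant $(64+16K)m^2U^4$ enlarged (or kept symbolic) to absorb the additional range contributed by $\sum_i B_i f_i$.

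Third, I would set up the uniform-convergence event $\mathcal{E}_3$ for the residual loss class: standard symmetrization gives $\mathbb{E}\big[\sup_\theta|L(\theta_0)-\widehat{L}(\theta_0)|\big]\le 2\,\mathfrak{R}_L(\mathcal{F},\mathcal{S}_0)$, and McDiarmid's inequality on the bounded residual loss gives the tail $\exp(-\gamma^2 N/(32m^2U^4(1+1/K)))$. Fourth, I would chain the pieces exactly as in Theorem~\ref{theorem:bound_full_resolution}: on the complement of $\mathcal{E}_1\cup\mathcal{E}_2\cup\mathcal{E}_3$, the empirical minimizer satisfies $\mathcal{L}(\theta_2)\le\mathcal{L}(\theta^\ast)<\big(\sqrt{\mathbb{E}[\mathcal{V}(\mathcal{S}_0)]+\Delta_b^2+\Delta_v^2}+\epsilon\big)^2NK$ by $\mathcal{E}_1^c$, so by $\mathcal{E}_2^c$ the center of the $\epsilon$-ball containing it cannot lie in $\mathcal{C}_1$ and hence lies in $\mathcal{C}_2$; unpacking the definition of $\mathcal{C}_2$, converting empirical risk to population risk via $\mathcal{E}_3^c$, and applying the triangle inequality for the empirical $L_2$ metric $d$ twice (once for the $\epsilon$-ball radius, once for the $\mathbb{E}[\mathcal{V}(\mathcal{S}_0)]$ offset in the identity $L=R+\mathbb{E}[\mathcal{V}(\mathcal{S}_0)]$) yields the stated bound on $R(\theta_2)$; a union bound over the three rare events gives the stated failure probability.

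The main obstacle is not the chaining, which is mechanical, but keeping the bookkeeping of the shifted target consistent — in particular verifying that the orthogonality identity $L(\theta_0)=R(\theta_0)+\mathbb{E}[\mathcal{V}(\mathcal{S}_0)]$ survives the shift. It does: the subtracted quantity $\sum_i B_i f_i(A_i\mathbf{x}_t)$ is a function of $\mathbf{x}_t$ only, so the cross term $\mathbb{E}\big[\langle \mathbb{E}[\mathbf{X}_{n,0}\mid\mathbf{x}_{n,k}]-\mathbf{X}_{n,0},\,h(\mathbf{x}_{n,k})\rangle\big]$ still vanishes for every $\mathbf{x}_{n,k}$-measurable $h$. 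One must also confirm the tail constants reflect the (still bounded) enlarged range of the residual loss. The payoff of this otherwise routine repetition — made explicit in Subsection~\ref{subsec:residual_variance} and the discussion following the theorem — is that $\Delta_b$, $\Delta_v$, $\mathfrak{R}_L(\mathcal{F},\mathcal{S}_0)$ and $\mathcal{N}(\mathcal{F},\epsilon,d)$ are all small exactly when the residual target $r(\mathbf{x}_t)$ has small variance, which is the quantitative improvement the bound is designed to expose.
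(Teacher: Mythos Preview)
Your proposal is correct and follows essentially the same route as the paper: the paper's proof of Theorem~\ref{theorem:bound_residual} simply states that the argument is identical to that of Theorem~\ref{theorem:bound_full_resolution} after redefining $\widehat{L}$, $L$, $R$, $\widehat{R}$, $\mathcal{V}(\mathcal{S})$, and the Rademacher complexities to include the fixed offset $\sum_i B_i f_i(A_i\mathbf{x}_{n,k};\theta_i)$. Your additional observations---that the prediction variance $\mathcal{V}(\mathcal{S}_0)$ is in fact unchanged by the shift (since the offset cancels in the difference $\mathbb{E}[\mathbf{X}_0\mid\mathbf{x}_t]-\mathbf{x}_0$) and that the orthogonality identity $L=R+\mathbb{E}[\mathcal{V}(\mathcal{S}_0)]$ survives because the offset is $\mathbf{x}_t$-measurable---are correct refinements; the paper's redefinition of $\mathcal{V}(\mathcal{S})$ appears to carry a stray $+\sum_i B_iA_if_i$ term, but your version is the one that makes Lemma~\ref{lemma:losses} go through. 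Regarding the tail constants, Assumption~\ref{assumption:bounded} already stipulates that the \emph{combined} denoiser $f_0+\sum_i B_i f_i$ is element-wise bounded in $[-U,U]$, so no enlargement is needed and the displayed constants are legitimate.
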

\begin{proof}
	The proof is in Appendix~\ref{proof:bound_residual}
\end{proof}

\subsection{Variance regularization}

\label{subsec:variance_regularization}

Bootstrapping diffusion involves applying variance regularization when training the residual denoiser $f_0(\cdot; \theta_0)$ by either penalizing the output variances or using a range adapter. The true motivation for variance regularization is to reduce the Lipschitz constant of residual denoiser.

As shown by the bound in Theorem~\ref{theorem:bound_residual}, the main component of the generalization error is determined by the covering numbers and Rademacher complexities. Additionally, the Rademacher complexity itself can be upper-bounded by covering numbers owing to Dudley's entropy integral. By reducing the Lipschitz constants, we effectively reduce the covering numbers and Radamacher complexities, which in turn results in lower generalization errors.

Ultimately, this improved generalization stems from the fact that training the residual diffusion denoiser becomes a simpler task when the combined denoiser already accurately estimates a significant portion of the score function. This less complex problem requires a network with reduced capacity. The difficulty in training the residual denoiser is proportional to the KL divergence $\left[\mu^A \,\|\, \mu(\theta_1, \ldots, \theta_I) \right]$. The KL divergence approximately quantifies the information missing from the partial data views. Consequently, the data required for training the residual denoiser is proportional to this uncaptured information. This suggests that bootstrapping diffusion only require data that are propositional to the difficulty of the problem. If the difficulty of the problem (the amount of uncaptured information) goes to zero, then the major components of the generation errors goes to zero. This implies that bootstrapping diffusion achieves near first-order optimal data efficiency.

\section{Experiments}

We perform experiments to validate our theoretical findings using the AFHQv2-Cat dataset (approximately 5,000 images) \citep{choi2020starganv2}. The full-resolution images are set to $256 \times 256$. All experiments are conducted in the latent space of a the Stable Diffusion v2 VAE. One partial view data we use is $8 \times 8$ patches and another partial view data are low-resolution down-sampled data. We present our generated images in Figure~\ref{fig:bootstrapping_vs_views}. The experiment results show that bootstrapping diffusion can effectively generate high-quality images with a small full-resolution training dataset. More experiment details are provided in Appendix~\ref{appendix:experiment}.

\section{Conclusion and Future Work}

We introduce bootstrapping diffusion, a novel method for training diffusion models in scenarios with scarce full-resolution data but abundant partial-view data. We posed and answered the theoretical question of whether partial-view data could be leveraged to achieve greater data efficiency when training full-resolution diffusion models. We demonstrated theoretically that our bootstrapping diffusion approach achieves near first-order optimal data efficiency. Additionally, our experimental results illustrate that bootstrapping diffusion effectively compensates for missing data, enabling successful training of full-resolution diffusion models. While this paper primarily emphasizes theoretical analysis, the experimental performance of the models can be further improved by optimizing aspects such as noise scheduling, learning rate scheduling, and network architecture design. The multiple diffusion models trained in bootstrapping diffusion can also be distilled into one diffusion model, which has not been investigated in this paper.

\bibliography{reference_paper}

\begin{thebibliography}{37}
\providecommand{\natexlab}[1]{#1}
\providecommand{\url}[1]{\texttt{#1}}
\expandafter\ifx\csname urlstyle\endcsname\relax
  \providecommand{\doi}[1]{doi: #1}\else
  \providecommand{\doi}{doi: \begingroup \urlstyle{rm}\Url}\fi

\bibitem[Anderson(1982)]{anderson1982reverse}
Anderson, B. D.~O.
\newblock Reverse-time diffusion equation models.
\newblock \emph{Stochastic Processes and their Applications}, 12\penalty0
  (3):\penalty0 313--326, May 1982.
\newblock \doi{10.1016/0304-4149(82)90058-0}.

\bibitem[Azuma(1967)]{azuma1967weighted}
Azuma, K.
\newblock Weighted sums of certain dependent random variables.
\newblock \emph{Tohoku Mathematical Journal}, 19\penalty0 (3):\penalty0
  357--367, 1967.
\newblock \doi{10.2748/tmj/1178243286}.

\bibitem[Bai et~al.(2024)Bai, Wang, Chen, and Sun]{bai2024expectation}
Bai, W., Wang, Y., Chen, W., and Sun, H.
\newblock An expectation-maximization algorithm for training clean diffusion
  models from corrupted observations.
\newblock In \emph{Advances in Neural Information Processing Systems}, 2024.
\newblock URL \url{https://arxiv.org/abs/2407.01014}.

\bibitem[Bain(2021)]{bain2021webvid}
Bain, M.
\newblock Webvid-10m: A large-scale video-text dataset.
\newblock \url{https://github.com/m-bain/webvid}, 2021.
\newblock Accessed: 2025-04-02.

\bibitem[Bartlett \& Mendelson(2002)Bartlett and
  Mendelson]{BartlettMendelson2002}
Bartlett, P.~L. and Mendelson, S.
\newblock Rademacher and gaussian complexities: Risk bounds and structural
  results.
\newblock \emph{Journal of Machine Learning Research}, 3:\penalty0 463--482,
  2002.

\bibitem[Billingsley(1995)]{billingsley1995probability}
Billingsley, P.
\newblock \emph{Probability and Measure}.
\newblock Wiley Series in Probability and Statistics. Wiley-Interscience, 3rd
  edition, 1995.
\newblock ISBN 9780471007104.

\bibitem[Brain(2023)]{kakaobrain2023coyo}
Brain, K.
\newblock Coyo-700m: Image-text dataset for better text-to-image generation.
\newblock \emph{Dataset Release}, 2023.
\newblock URL \url{https://github.com/kakaobrain/coyo-dataset}.

\bibitem[Chen et~al.(2023)Chen, Liu, Zhu, He, Dai, Loy, Yu,
  et~al.]{chen2023fantasia3d}
Chen, Z., Liu, L., Zhu, X., He, Z., Dai, B., Loy, C.~C., Yu, J., et~al.
\newblock Fantasia3d: Text-to-3d content creation with holistic understanding.
\newblock \emph{arXiv preprint arXiv:2303.13873}, 2023.
\newblock URL \url{https://arxiv.org/abs/2303.13873}.

\bibitem[Cheng et~al.(2023)Cheng, Lee, Tulyakov, Schwing, and
  Gui]{cheng2023sdfusion}
Cheng, Y.-C., Lee, H.-Y., Tulyakov, S., Schwing, A.~G., and Gui, L.-Y.
\newblock {SDFusion}: Multimodal 3d shape completion, reconstruction, and
  generation.
\newblock In \emph{Proceedings of the IEEE/CVF Conference on Computer Vision
  and Pattern Recognition (CVPR)}, pp.\  4456--4465, 2023.
\newblock URL \url{https://arxiv.org/abs/2212.04493}.

\bibitem[Choi et~al.(2020)Choi, Uh, Yoo, and Ha]{choi2020starganv2}
Choi, Y., Uh, Y., Yoo, J., and Ha, J.-W.
\newblock Stargan v2: Diverse image synthesis for multiple domains.
\newblock In \emph{Proceedings of the IEEE/CVF Conference on Computer Vision
  and Pattern Recognition (CVPR)}, pp.\  8188--8197, 2020.

\bibitem[{DAMO Academy, Alibaba Group}(2023)]{modelscope2023}
{DAMO Academy, Alibaba Group}.
\newblock Modelscope text-to-video synthesis.
\newblock
  \url{https://huggingface.co/spaces/damo-vilab/modelscope-text-to-video-synthesis},
  2023.
\newblock Open-source demo.

\bibitem[Daras et~al.(2023)Daras, Shah, Dagan, Gollakota, Dimakis, and
  Klivans]{daras2023ambient}
Daras, G., Shah, K., Dagan, Y., Gollakota, A., Dimakis, A.~G., and Klivans, A.
\newblock Ambient diffusion: Learning clean distributions from corrupted data.
\newblock In \emph{Advances in Neural Information Processing Systems
  (NeurIPS)}, 2023.
\newblock URL \url{https://arxiv.org/abs/2305.19256}.

\bibitem[Daras et~al.(2024)Daras, Dimakis, and Daskalakis]{daras2024consistent}
Daras, G., Dimakis, A.~G., and Daskalakis, C.
\newblock Consistent diffusion meets tweedie: Training exact ambient diffusion
  models with noisy data.
\newblock In \emph{Proceedings of the 41st International Conference on Machine
  Learning (ICML)}, 2024.
\newblock URL \url{https://arxiv.org/abs/2404.10177}.

\bibitem[Dhariwal \& Nichol(2021)Dhariwal and Nichol]{dhariwal2021diffusion}
Dhariwal, P. and Nichol, A.~Q.
\newblock Diffusion models beat gans on image synthesis.
\newblock In \emph{Advances in Neural Information Processing Systems
  (NeurIPS)}, 2021.
\newblock URL \url{https://arxiv.org/abs/2105.05233}.
\newblock arXiv:2105.05233.

\bibitem[Doob(1953)]{doob1953stochastic}
Doob, J.~L.
\newblock \emph{Stochastic Processes}.
\newblock Wiley, 1953.
\newblock ISBN 9780471218135.

\bibitem[Efron(2012)]{efron2011tweedie}
Efron, B.
\newblock Tweedie’s formula and selection bias.
\newblock \emph{Journal of the American Statistical Association}, 106\penalty0
  (496):\penalty0 1602--1614, 2012.
\newblock \doi{10.1198/jasa.2011.tm11181}.

\bibitem[Franzese et~al.(2024)Franzese, Bounoua, and Michiardi]{minde2024}
Franzese, G., Bounoua, M., and Michiardi, P.
\newblock {MINDE}: Mutual information neural diffusion estimation.
\newblock In \emph{The Twelfth International Conference on Learning
  Representations}, 2024.
\newblock URL \url{https://openreview.net/forum?id=0kWd8SJq8d}.

\bibitem[Graf et~al.(2022)Graf, Zeng, Rieck, Niethammer, and
  Kwitt]{graf2022excess}
Graf, F., Zeng, S., Rieck, B., Niethammer, M., and Kwitt, R.
\newblock On measuring excess capacity in neural networks.
\newblock In \emph{Advances in Neural Information Processing Systems
  (NeurIPS)}, 2022.
\newblock URL \url{https://openreview.net/forum?id=l2CVt1ySC2Q}.

\bibitem[Gu et~al.(2023{\natexlab{a}})Gu, Lin, He, Zhu, Qiao, and
  Dai]{animatediff2023}
Gu, Y., Lin, J., He, T., Zhu, Z., Qiao, Y., and Dai, Q.
\newblock Animatediff: Animate your personalized text-to-image diffusion model
  without tuning.
\newblock \emph{arXiv preprint arXiv:2307.04725}, 2023{\natexlab{a}}.
\newblock URL \url{https://arxiv.org/abs/2307.04725}.

\bibitem[Gu et~al.(2023{\natexlab{b}})Gu, Xu, Wang, Zhao, Zheng, Zhao, Ren, and
  Zhao]{gu2023pixart}
Gu, Z., Xu, S., Wang, X., Zhao, L., Zheng, L., Zhao, H., Ren, Z., and Zhao,
  W.~X.
\newblock Pixart-$alpha$: High-resolution text-to-image diffusion with
  pixel-level semantics.
\newblock \emph{arXiv preprint arXiv:2310.00426}, 2023{\natexlab{b}}.
\newblock URL \url{https://arxiv.org/abs/2310.00426}.

\bibitem[Ho et~al.(2020)Ho, Jain, and Abbeel]{ho2020denoising}
Ho, J., Jain, A., and Abbeel, P.
\newblock Denoising diffusion probabilistic models.
\newblock In \emph{Advances in Neural Information Processing Systems
  (NeurIPS)}, volume~33, pp.\  6840--6851, 2020.
\newblock URL \url{https://arxiv.org/abs/2006.11239}.

\bibitem[Ho et~al.(2022)Ho, Salimans, Norouzi, Chan, and Fleet]{ho2022video}
Ho, J., Salimans, T., Norouzi, M., Chan, W., and Fleet, D.~J.
\newblock Video diffusion models.
\newblock \emph{arXiv preprint arXiv:2204.03458}, 2022.
\newblock URL \url{https://arxiv.org/abs/2204.03458}.

\bibitem[Karras et~al.(2022)Karras, Aittala, Aila, and Laine]{karras2022edm}
Karras, T., Aittala, M., Aila, T., and Laine, S.
\newblock Elucidating the design space of diffusion-based generative models.
\newblock \emph{arXiv preprint arXiv:2206.00364}, 2022.

\bibitem[Kawar et~al.(2023)Kawar, Elata, Michaeli, and Elad]{kawar2023gsure}
Kawar, B., Elata, N., Michaeli, T., and Elad, M.
\newblock Gsure-based diffusion model training with corrupted data.
\newblock \emph{arXiv preprint arXiv:2305.13128}, 2023.

\bibitem[Liang et~al.(2024)Liang, He, Yang, and Dai]{liang2024scaling}
Liang, Z., He, H., Yang, C., and Dai, B.
\newblock Scaling laws for diffusion transformers.
\newblock \emph{arXiv preprint arXiv:2410.08184}, 2024.
\newblock URL \url{https://arxiv.org/abs/2410.08184}.

\bibitem[Lin et~al.(2022)Lin, Gao, Rajeswaran, Tan, Ho, Salimans, Chan, and
  Norouzi]{lin2022magic3d}
Lin, C.-H., Gao, M., Rajeswaran, A., Tan, M.-Y., Ho, J., Salimans, T., Chan,
  W., and Norouzi, M.
\newblock Magic3d: High-resolution text-to-3d content creation.
\newblock \emph{arXiv preprint arXiv:2211.10440}, 2022.
\newblock URL \url{https://arxiv.org/abs/2211.10440}.

\bibitem[Mitzenmacher \& Upfal(2017)Mitzenmacher and
  Upfal]{mitzenmacher2017probability}
Mitzenmacher, M. and Upfal, E.
\newblock \emph{Probability and Computing: Randomization and Probabilistic
  Techniques in Algorithms and Data Analysis}.
\newblock Cambridge University Press, Cambridge, UK, 2nd edition, 2017.
\newblock ISBN 9781107154889.

\bibitem[OpenAI(2025)]{openai2025sora}
OpenAI.
\newblock Sora: World-modeling text-to-video generation.
\newblock \url{https://openai.com/sora}, 2025.
\newblock Preview demo, not peer-reviewed.

\bibitem[Poole et~al.(2022)Poole, Jain, Barron, Mildenhall, Duan, Nichol,
  Dhariwal, Abbeel, Dinh, and Chan]{poole2022dreamfusion}
Poole, B., Jain, A., Barron, J.~T., Mildenhall, B., Duan, Y., Nichol, A.,
  Dhariwal, P., Abbeel, P., Dinh, L., and Chan, E.
\newblock Dreamfusion: Text-to-3d using 2d diffusion.
\newblock \emph{arXiv preprint arXiv:2209.14988}, 2022.
\newblock URL \url{https://arxiv.org/abs/2209.14988}.

\bibitem[Rombach et~al.(2022)Rombach, Blattmann, Lorenz, Esser, and
  Ommer]{rombach2022high}
Rombach, R., Blattmann, A., Lorenz, D., Esser, P., and Ommer, B.
\newblock High-resolution image synthesis with latent diffusion models.
\newblock \emph{arXiv preprint arXiv:2112.10752}, 2022.
\newblock URL \url{https://arxiv.org/abs/2112.10752}.

\bibitem[Schuhmann et~al.(2022)Schuhmann, Beaumont, Vencu, Gordon, Wightman,
  Cherti, Coombes, Luc, Mill, Kiyono, et~al.]{schuhmann2022laion5b}
Schuhmann, C., Beaumont, R., Vencu, R., Gordon, C., Wightman, R., Cherti, M.,
  Coombes, T., Luc, P., Mill, J., Kiyono, S., et~al.
\newblock Laion-5b: An open large-scale dataset for training next generation
  image-text models.
\newblock \emph{arXiv preprint arXiv:2210.08402}, 2022.
\newblock URL \url{https://arxiv.org/abs/2210.08402}.

\bibitem[Sohl-Dickstein et~al.(2015)Sohl-Dickstein, Weiss, Maheswaranathan, and
  Ganguli]{sohl2015deep}
Sohl-Dickstein, J., Weiss, E.~A., Maheswaranathan, N., and Ganguli, S.
\newblock Deep unsupervised learning using nonequilibrium thermodynamics.
\newblock In \emph{International Conference on Machine Learning (ICML)}, pp.\
  2256--2265. PMLR, 2015.
\newblock URL \url{https://arxiv.org/abs/1503.03585}.

\bibitem[Song et~al.(2021)Song, Sohl-Dickstein, Kingma, Kumar, Ermon, and
  Poole]{song2021scorebased}
Song, Y., Sohl-Dickstein, J., Kingma, D.~P., Kumar, A., Ermon, S., and Poole,
  B.
\newblock Score-based generative modeling through stochastic differential
  equations.
\newblock In \emph{International Conference on Learning Representations
  (ICLR)}, 2021.
\newblock URL \url{https://arxiv.org/abs/2011.13456}.

\bibitem[Song et~al.(2022)Song, Meng, Tan, He, Ermon,
  et~al.]{song2022versatile}
Song, Y., Meng, C., Tan, R., He, Y., Ermon, S., et~al.
\newblock Versatile diffusion: Text, images, and variations all in one
  diffusion model.
\newblock \emph{arXiv preprint arXiv:2211.08332}, 2022.
\newblock URL \url{https://arxiv.org/abs/2211.08332}.

\bibitem[Welsh(2023)]{welsh2023little}
Welsh, D.
\newblock \emph{A Little Book of Martingales}, volume~78 of \emph{Texts and
  Readings in Mathematics}.
\newblock Hindustan Book Agency, 2023.
\newblock ISBN 9789819744716.

\bibitem[Xue et~al.(2022)]{xue2022hdvila}
Xue, H. et~al.
\newblock Hd-vila-100m: A large-scale high-definition video-language dataset.
\newblock
  \url{https://github.com/microsoft/XPretrain/blob/main/hd-vila-100m/README.md},
  2022.
\newblock Accessed: 2025-04-02.

\bibitem[Zhou et~al.(2021)Zhou, Du, and Wu]{zhou2021pointvoxel}
Zhou, L., Du, Y., and Wu, J.
\newblock 3d shape generation and completion through point-voxel diffusion.
\newblock In \emph{Proceedings of the IEEE/CVF International Conference on
  Computer Vision (ICCV)}, pp.\  5826--5835, 2021.
\newblock URL \url{https://arxiv.org/abs/2104.03670}.

\end{thebibliography}
\bibliographystyle{icml2023}

\newpage
\appendix
\onecolumn

\begin{center}{\Large{\textbf{Appendix}}}\end{center}

\section{Additional discussion on variance of the residual denoiser}

\label{appendix:residual_variance}

From another perspective if $\sum_{i=1}^I B_i f(A_i \mathbf{x}_t, t ; \theta_i)$ well approximates $\mathbb{E}[\mathbf{X}_{0} \mid A_1\mathbf{x}_t, \ldots, A_I\mathbf{x}_{t}]$, then $\sum_{i=1}^I B_i f(A_i \mathbf{x}_t, t ; \theta_i)$ is approximately a bias-free estimator of $\mathbb{E}[\mathbf{X}_{0} \mid \mathbf{x}_{t}]$. In other words, it follows from tower properties of expectations that
\begin{align}
\mathbb{E}\left[\mathbb{E}[\mathbf{X}_{0} \mid \mathbf{X}_t ] \middle| \sum_i B_i A_i\mathbf{X}_t \right] = \mathbb{E}\left[\mathbf{X}_{0} \middle| \sum_i B_iA_i\mathbf{X}_t \right]
\end{align}

In addition, Theorem~\ref{theorem:residual_variance} provides an identity that relating the MMSE (Minimum Mean Squared Error) $\mathbb{E}\left[\left(\mathbf{X}_0 - \mathbb{E}\left[\mathbf{X}_{0} \middle| \mathbf{X}_t \right]\right)^2\right]$, the error variance of the combined denoiser $\mathbb{E}\left[\left(\mathbf{X}_0 - \mathbb{E}\left[\mathbf{X}_0 \middle| \sum_i B_iA_i\mathbf{X}_t\right]\right)^2\right]$, and the variance of the residual denoiser $\mathbb{E}\left[\left(\mathbb{E}\left[\mathbf{X}_{0} \middle| \mathbf{X}_t \right] - \mathbb{E}\left[\mathbf{X}_0 \middle| \sum_i B_iA_i\mathbf{X}_t\right]\right)^2\right]$. It can be checked that if the error variance of combined denoiser approaches the MMSE, the residual denoiser should have small variances.

\begin{theorem}
	\label{theorem:residual_variance}
	\begin{align}
	& \mathbb{E}\left[\left(\mathbf{X}_0 - \mathbb{E}\left[\mathbf{X}_0 \middle| \sum_i B_iA_i\mathbf{X}_t\right]\right)^2\right] \nonumber \\
	& = \mathbb{E}\left[\left(\mathbf{X}_0 - \mathbb{E}\left[\mathbf{X}_{0} \middle| \mathbf{X}_t \right]\right)^2\right]  +  
	\mathbb{E}\left[\left(\mathbb{E}\left[\mathbf{X}_{0} \middle| \mathbf{X}_t \right] - \mathbb{E}\left[\mathbf{X}_0 \middle| \sum_i B_iA_i\mathbf{X}_t\right]\right)^2\right] \nonumber \\
	\end{align}
\end{theorem}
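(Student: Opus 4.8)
The identity to prove is a Pythagorean (orthogonality) decomposition of the mean squared error, so the natural strategy is to exhibit the relevant orthogonality and expand. Write $Y := \mathbb{E}[\mathbf{X}_0 \mid \mathbf{X}_t]$ for the full conditional expectation and $Z := \mathbb{E}[\mathbf{X}_0 \mid \sum_i B_i A_i \mathbf{X}_t]$ for the coarser one. The plan is to decompose
\begin{align}
\mathbf{X}_0 - Z = (\mathbf{X}_0 - Y) + (Y - Z) \nonumber
\end{align}
and then show that the cross term $\mathbb{E}[(\mathbf{X}_0 - Y)(Y - Z)]$ vanishes, which immediately yields the stated identity after squaring and taking expectations.

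The key observation, which is exactly the tower-property remark already recorded in Appendix~\ref{appendix:residual_variance}, is that $\sum_i B_i A_i \mathbf{X}_t$ is a measurable function of $\mathbf{X}_t$, so $\sigma(\sum_i B_i A_i \mathbf{X}_t) \subseteq \sigma(\mathbf{X}_t)$. Consequently both $Y$ and $Z$ are $\sigma(\mathbf{X}_t)$-measurable, and the residual $\mathbf{X}_0 - Y$ is orthogonal to every $\sigma(\mathbf{X}_t)$-measurable function. First I would condition on $\mathbf{X}_t$ inside the cross term: $\mathbb{E}[(\mathbf{X}_0 - Y)(Y - Z) \mid \mathbf{X}_t] = (Y - Z)\,\mathbb{E}[\mathbf{X}_0 - Y \mid \mathbf{X}_t] = (Y - Z)\cdot 0 = 0$, since $\mathbb{E}[\mathbf{X}_0 \mid \mathbf{X}_t] = Y$ by definition and $Y - Z$ is $\sigma(\mathbf{X}_t)$-measurable and thus pulls out of the inner expectation. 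Taking the outer expectation gives $\mathbb{E}[(\mathbf{X}_0 - Y)(Y - Z)] = 0$. (For vector-valued $\mathbf{X}_0$ one does this coordinatewise, or equivalently with an inner product; the element-wise boundedness from Assumption~\ref{assumption:bounded} ensures all expectations are finite so the manipulations are justified.)

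With the cross term gone, squaring $\mathbf{X}_0 - Z = (\mathbf{X}_0 - Y) + (Y - Z)$ and taking expectations gives
\begin{align}
\mathbb{E}\left[(\mathbf{X}_0 - Z)^2\right] = \mathbb{E}\left[(\mathbf{X}_0 - Y)^2\right] + \mathbb{E}\left[(Y - Z)^2\right], \nonumber
\end{align}
which is precisely the claimed identity once $Y$ and $Z$ are written back out. There is essentially no obstacle here — the only thing to be careful about is confirming the nesting $\sigma(\sum_i B_i A_i \mathbf{X}_t) \subseteq \sigma(\mathbf{X}_t)$ so that the conditional-expectation tower property applies in the direction used, and that integrability holds, both of which are immediate from the setup and Assumption~\ref{assumption:bounded}. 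If anything is mildly delicate it is just bookkeeping for the vector case, handled by interpreting the squares as squared $\ell_2$-norms and the cross term as an inner product summed over coordinates.
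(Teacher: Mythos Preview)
Your proposal is correct and follows essentially the same route as the paper's proof: decompose $\mathbf{X}_0 - Z = (\mathbf{X}_0 - Y) + (Y - Z)$, then kill the cross term by conditioning on $\mathbf{X}_t$ and using that $Y - Z$ is $\sigma(\mathbf{X}_t)$-measurable so it pulls out, leaving $\mathbb{E}[\mathbf{X}_0 - Y \mid \mathbf{X}_t] = 0$. The paper's argument is identical in structure (its steps (a) and (b) are exactly your tower-property-then-pull-out move); if anything, your write-up is slightly more explicit about the $\sigma$-algebra nesting and integrability.
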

\begin{proof}
	The proof is in Appendix~\ref{proof:residual_variance}
\end{proof}

\section{More on experimental results}

\label{appendix:experiment}

We use the following partial-data views, patch views and low-resolution view. For patch views, we extract patches of size $8 \times 8$ from the original $32 \times 32$ resolution latents. Specifically, patches are taken from exactly 16 distinct locations. Each patch occupies rows from $8n$ to $8(n+1)-1$ and columns from $8m$ to $8(m+1)-1$, with $n,m \in \{0,1,2,3\}$. The projection matrix $A_i$ is defined as the operator that maps a full-resolution latent onto one of these patches. For low-resolution view, we downsample the original latents from the resolution of $32 \times 32$ to $8 \times 8$ using bilinear interpolation. The projection matrix $A_i$ is defined as the downsampling operator that maps the full-resolution latents to their corresponding low-resolution versions.

\paragraph{Partial view model training.} We train a single diffusion model for the 16 patch views without specifying the particular view. Specifically, the input to the patch-view diffusion model is a 
$8 \times 8$ patch with added noise, and the model predicts the corresponding expectations of the clean $8 \times 8$ patches. Similarly, we train another diffusion model specifically for the low-resolution view. In this case, the diffusion model takes as input the noisy $8 \times 8$ low-resolution latent and predicts the expectations of the corresponding clean low-resolution $8 \times 8$ latents. We train the low-resolution view diffusion model for 2600 epochs, observing that the training loss stops decreasing at this point. For the patch-view diffusion model, we train it for 1300 epochs.

\paragraph{Hyper-parameters at the range adapter.} We choose range adapter $s(\cdot; t)$ to be a piece-wise linear function with respect to the diffusion step $t$. We divide the diffusion step range $[0, T]$ into 100 intervals and estimate the $s(\cdot; t)$ value at the center of each interval. For determining these hyper-parameters, we train a standard diffusion model for the $32 \times 32$ full-resolution latents. We then generate 64 images using different random seeds and estimate the optimal $B_i$ and $s(\cdot; t)$ from the 64 sampling (reverse) diffusion processes.

\paragraph{Training dataset for the partial-view models.}
We create our training datasets based on the AFHQv2-Cat dataset. Specifically, we divide each latent representation from the original AFHQv2-Cat dataset into 16 individual patches. All these extracted patches are collected together, forming the training dataset for the patch-view diffusion model. Information regarding the original images corresponding to each patch, or associations between patches originating from the same image, is intentionally omitted. For low-resolution view diffusion models, we downsample the latent representations from the original AFHQv2-Cat dataset to obtain low-resolution latents, which serve as the training dataset for the low-resolution view diffusion models.

\paragraph{Training dataset for the residual bootstrapping diffusion model.} We randomly select 64 full-resolution images (latents) and make the 64 images as the only training data for the residual bootstrapping diffusion model.

\paragraph{Implementation details.}
Our implementation closely follows the sampling strategies, noise scheduling, and network architectures described in \citet{karras2022edm} and \citet{dhariwal2021diffusion}. Specifically, we adopt the preconditioning approach from \citet{karras2022edm}, and employ a U-Net architecture similar to that presented by \citet{dhariwal2021diffusion}.

\paragraph{Computing resources}
We train our models using a workstation with 2 Nvidia A100 GPUs.  

\paragraph{Generated images.}
Figure~\ref{fig:bootstrapping_vs_views} illustrates examples of generated images. The first row shows images produced by the low-resolution-view diffusion model. At each diffusion sampling step, the noisy latents are down-sampled to a resolution of $8 \times 8$. The model then applies $f(\cdot; \theta_i)$ to predict the expectation at this lower 
$8 \times 8$ resolution. Subsequently, we up-sample these predicted latent expectations back to the original resolution of $32 \times 32$ and use them for the next sampling iteration.

The second row of Figure~\ref{fig:bootstrapping_vs_views} displays images generated by the patch-view diffusion model. During each diffusion sampling step, the noisy latents are divided into 16 patches, and the model applies the function $f(\cdot; \theta_i)$ independently to each patch. The resulting denoised patches are then recombined into latent representations, which are utilized in the subsequent sampling iteration.
The third row of Figure~\ref{fig:bootstrapping_vs_views} presents images generated by linearly combining the outputs of the low-resolution-view and patch-view diffusion models at each diffusion sampling step.

The fourth row of Figure~\ref{fig:bootstrapping_vs_views} illustrates images produced using the bootstrapping diffusion approach described in Algorithm~\ref{algorithm:bootstrapping}. 

From the generated images, we observe that those produced by the low-resolution-view diffusion model typically lack high-frequency details. This is expected, since the low-resolution data inherently contain limited information regarding high-frequency components. Conversely, images generated by the patch-view diffusion model exhibit rich local details but generally lack coherent high-level structure, as patch-view data do not include global structural information. Additionally, images generated through a straightforward linear combination approach fail to produce accurate global structures. We attribute this to the absence of high-frequency global structural information in both the patch-view and low-resolution-view datasets. Finally, the bootstrapping diffusion method effectively compensates for this missing information, resulting in high-quality, well-structured generated images.

\begin{figure*}[!ht]
	\centering
	\includegraphics[width=\textwidth]{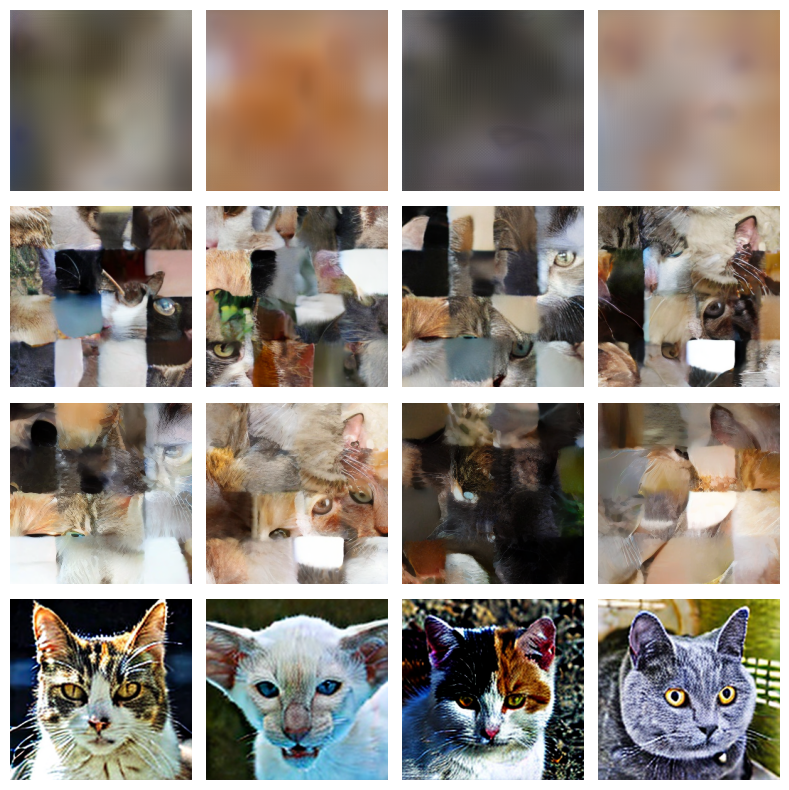}
	\caption{Randomly generated images. The first row of images are generated using the low-resolution view diffusion model. The second row of images are generated using the patch view diffusion model. The third row of images are generated using a combination of low-resolution view and patch view diffusion models. The fourth row of images are generated using the bootstrapping diffusion model proposed in this paper. }
	\label{fig:bootstrapping_vs_views}
\end{figure*}

\section{Potential Social Implications}

Our work may lead to both positive and negative potential societal impacts. Specifically, we identify positive impacts such as improvements in computational efficiency, reduced resource usage, and enabling broader accessibility of advanced image generation techniques for beneficial applications (e.g., artistic creativity, educational content, data augmentation for medical imaging). At the same time, we acknowledge potential negative impacts, including the misuse of generated images in creating misleading or deceptive content, such as fake news, misinformation campaigns, or unauthorized impersonation.

\section{Auxiliary lemmas used in the proof}

First, let us recall the Azuma-Hoeffding inequality as presented in Theorem 13.4 of \citet{mitzenmacher2017probability}, originally due to \citet{azuma1967weighted}.
\begin{theorem}[Generalized Azuma's inequality]
	Let $(X_k)_{k=0}^n$ be a martingale adapted to filtration $\{\mathcal{F}_k\}$. Suppose there exist constants $a_k,b_k$ such that
	\[
	a_k \leq X_k - X_{k-1} \leq b_k,\quad \text{for all } 1 \leq k \leq n.
	\]
	Then, for any $t>0$, we have
	\[
	P(X_n - X_0 \geq t) \leq \exp\left(-\frac{2t^2}{\sum_{k=1}^{n}(b_k - a_k)^2}\right).
	\]
	Similarly,
	\[
	P(X_n - X_0 \leq -t) \leq \exp\left(-\frac{2t^2}{\sum_{k=1}^{n}(b_k - a_k)^2}\right).
	\]
\end{theorem}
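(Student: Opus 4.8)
The final statement is the classical Azuma--Hoeffding martingale concentration inequality, so the plan is the standard Chernoff-bound argument combined with a conditional Hoeffding lemma. Write $D_k = X_k - X_{k-1}$ for the martingale increments, so that $X_n - X_0 = \sum_{k=1}^n D_k$, each $D_k$ is $\mathcal{F}_k$-measurable with $\mathbb{E}[D_k \mid \mathcal{F}_{k-1}] = 0$ by the martingale property, and $a_k \leq D_k \leq b_k$ by hypothesis. For any $\lambda > 0$, the exponential Markov inequality gives $P(X_n - X_0 \geq t) \leq e^{-\lambda t}\,\mathbb{E}\big[e^{\lambda(X_n - X_0)}\big]$, so it suffices to bound the moment generating function of the sum and then optimize over $\lambda$.

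The moment generating function is controlled by peeling off increments one at a time. Since $e^{\lambda(X_{n-1}-X_0)}$ is $\mathcal{F}_{n-1}$-measurable, the tower property gives $\mathbb{E}[e^{\lambda(X_n - X_0)}] = \mathbb{E}\big[e^{\lambda(X_{n-1}-X_0)}\,\mathbb{E}[e^{\lambda D_n}\mid \mathcal{F}_{n-1}]\big]$. The key ingredient is a conditional Hoeffding lemma: if $\mathbb{E}[Y\mid\mathcal{G}] = 0$ and $a \leq Y \leq b$ almost surely, then $\mathbb{E}[e^{\lambda Y}\mid\mathcal{G}] \leq \exp(\lambda^2(b-a)^2/8)$. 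I would prove this working conditionally on $\mathcal{G}$: by convexity of $y \mapsto e^{\lambda y}$ on $[a,b]$ one has $e^{\lambda y} \leq \tfrac{b-y}{b-a}e^{\lambda a} + \tfrac{y-a}{b-a}e^{\lambda b}$; taking conditional expectations and using $\mathbb{E}[Y\mid\mathcal{G}]=0$ leaves $\tfrac{b}{b-a}e^{\lambda a} - \tfrac{a}{b-a}e^{\lambda b}$, whose logarithm $\psi(\lambda)$ satisfies $\psi(0)=\psi'(0)=0$ and $\psi''(\lambda)\leq (b-a)^2/4$ by a standard calculation (it is the variance of a Bernoulli-type law supported in $[a,b]$, hence at most one quarter of the squared range); Taylor's theorem then yields $\psi(\lambda)\leq \lambda^2(b-a)^2/8$.

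Applying this with $Y = D_n$, $\mathcal{G} = \mathcal{F}_{n-1}$ gives $\mathbb{E}[e^{\lambda D_n}\mid\mathcal{F}_{n-1}] \leq \exp(\lambda^2(b_n-a_n)^2/8)$, hence $\mathbb{E}[e^{\lambda(X_n-X_0)}] \leq \exp(\lambda^2(b_n-a_n)^2/8)\,\mathbb{E}[e^{\lambda(X_{n-1}-X_0)}]$; iterating down to $X_0 - X_0 = 0$ produces $\mathbb{E}[e^{\lambda(X_n-X_0)}] \leq \exp\big(\tfrac{\lambda^2}{8}\sum_{k=1}^n (b_k-a_k)^2\big)$. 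Substituting into the Chernoff bound gives $P(X_n - X_0 \geq t) \leq \exp\big(-\lambda t + \tfrac{\lambda^2}{8}\sum_k(b_k-a_k)^2\big)$, and the choice $\lambda = 4t/\sum_k (b_k-a_k)^2$ minimizes the exponent and produces exactly $\exp\big(-2t^2/\sum_k(b_k-a_k)^2\big)$. The lower-tail statement follows by applying the upper-tail result to the martingale $(-X_k)$, whose increments lie in $[-b_k, -a_k]$, an interval of the same width. The only genuinely technical step is the conditional Hoeffding lemma --- the convexity step and the bound $\psi''\leq(b-a)^2/4$ --- but this is a classical one-variable estimate; the remaining work is bookkeeping with the tower property and a single scalar optimization.
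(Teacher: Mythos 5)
Your proof is correct: the Chernoff bound, the conditional Hoeffding lemma via convexity and the bound $\psi''(\lambda)\leq (b-a)^2/4$, the iterated tower-property peeling, the choice $\lambda = 4t/\sum_k (b_k-a_k)^2$, and the reduction of the lower tail to the martingale $(-X_k)$ are all sound. The paper itself does not prove this statement; it imports it as a known auxiliary result (Theorem 13.4 of Mitzenmacher and Upfal, due to Azuma), and your argument is exactly the standard proof given in that reference, so there is nothing to reconcile.
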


We will use the tower properties of expectations as presented in Theorem 34.4 of \citet{billingsley1995probability}
\begin{theorem}
	If a random variable $X$ is integrable and the $\sigma$-algebras $\mathcal{F}_1$ and $\mathcal{F}_2$ satisfy $\mathcal{F}_1 \subset \mathcal{F}_2$, then
	\begin{align}
	\mathbb{E}\left[\mathbb{E}[X \mid \mathcal{F}_2] \mid \mathcal{F}_1\right] = \mathbb{E}[X \mid \mathcal{F}_1] 
	\end{align}
\end{theorem}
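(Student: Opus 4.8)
The plan is to verify that the random variable $\mathbb{E}[X\mid\mathcal{F}_1]$ itself satisfies the two properties that characterize $\mathbb{E}[\mathbb{E}[X\mid\mathcal{F}_2]\mid\mathcal{F}_1]$, and then to invoke the almost-sure uniqueness of conditional expectation. As a preliminary, I would note that conditional expectation preserves integrability (a standard consequence of its Radon--Nikodym construction, or of $\mathbb{E}[\,|\mathbb{E}[X\mid\mathcal{G}]|\,]\le\mathbb{E}[|X|]$), so both $\mathbb{E}[X\mid\mathcal{F}_1]$ and $\mathbb{E}[X\mid\mathcal{F}_2]$ are integrable. In particular $\mathbb{E}[\mathbb{E}[X\mid\mathcal{F}_2]\mid\mathcal{F}_1]$ is a well-defined object, and $\mathbb{E}[X\mid\mathcal{F}_1]$ is $\mathcal{F}_1$-measurable by construction.

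The core step is to match the defining integrals over all test sets in $\mathcal{F}_1$. Fix an arbitrary $A\in\mathcal{F}_1$. The defining property of $\mathbb{E}[X\mid\mathcal{F}_1]$, applied with test set $A$, gives $\int_A \mathbb{E}[X\mid\mathcal{F}_1]\,dP=\int_A X\,dP$. Because $\mathcal{F}_1\subseteq\mathcal{F}_2$, the set $A$ also lies in $\mathcal{F}_2$ and therefore qualifies as a test set in the definition of $\mathbb{E}[X\mid\mathcal{F}_2]$, giving $\int_A \mathbb{E}[X\mid\mathcal{F}_2]\,dP=\int_A X\,dP$. Subtracting, $\int_A \mathbb{E}[X\mid\mathcal{F}_1]\,dP=\int_A \mathbb{E}[X\mid\mathcal{F}_2]\,dP$ for every $A\in\mathcal{F}_1$.

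To conclude, $\mathbb{E}[X\mid\mathcal{F}_1]$ is $\mathcal{F}_1$-measurable, integrable, and satisfies the identity $\int_A\mathbb{E}[X\mid\mathcal{F}_1]\,dP=\int_A\mathbb{E}[X\mid\mathcal{F}_2]\,dP$ for all $A\in\mathcal{F}_1$, which are exactly the defining properties of $\mathbb{E}[\mathbb{E}[X\mid\mathcal{F}_2]\mid\mathcal{F}_1]$; by the $P$-almost-sure uniqueness of conditional expectation, the two random variables coincide a.s. There is no genuinely hard step in this argument; the only point I would take care to spell out explicitly is that the nesting $\mathcal{F}_1\subseteq\mathcal{F}_2$ is precisely what allows a test set taken from $\mathcal{F}_1$ to be reused in the definition of $\mathbb{E}[\cdot\mid\mathcal{F}_2]$, and that the identity is false in general without this hypothesis. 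If one prefers to avoid invoking uniqueness directly, the same conclusion follows from a monotone-class argument that bootstraps $\mathbb{E}[\mathbf{1}_A\,\mathbb{E}[X\mid\mathcal{F}_2]]=\mathbb{E}[\mathbf{1}_A X]$ from indicators of sets in $\mathcal{F}_1$, but the direct verification above is shorter.
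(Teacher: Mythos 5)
The paper does not prove this statement at all: it is quoted verbatim as an auxiliary result, citing Theorem 34.4 of Billingsley (1995), so there is no in-paper proof to compare against. Your argument---checking that $\mathbb{E}[X\mid\mathcal{F}_1]$ is $\mathcal{F}_1$-measurable, integrable, and has the same integral as $\mathbb{E}[X\mid\mathcal{F}_2]$ over every test set $A\in\mathcal{F}_1$ (using $\mathcal{F}_1\subset\mathcal{F}_2$ to reuse $A$ as a test set for the finer $\sigma$-algebra), then invoking almost-sure uniqueness of conditional expectation---is exactly the standard textbook proof, and it is correct.
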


For any $\theta$, we can always rewrite $\mathcal{L}(\theta)$ as follows,
\begin{align}
\label{eq:loss_alternative}
\mathcal{L}(\theta) & = \sum_{k=1}^{K} \sum_{n=1}^{N} \left\| f(\mathbf{x}_{n,k}, t_{n,k}; \theta) - \mathbf{x}_{n,0} \right\|_2^2 \nonumber \\
& = \sum_{k=1}^{K} \sum_{n=1}^{N} \left\| f(\mathbf{x}_{n,k}, t_{n,k}; \theta) - \mathbb{E}[\mathbf{X}_{n,0} \mid \mathbf{x}_{n,k}, t_{n,k}] + \mathbb{E}[\mathbf{X}_0 \mid \mathbf{x}_{n,k}, t_{n,k}] - \mathbf{x}_{n,0} \right\|_2^2 \nonumber \\
& = \sum_{k=1}^{K} \sum_{n=1}^{N} \left\| f(\mathbf{x}_{n,k}, t_{n,k}; \theta) - E[\mathbf{X}_{n,k} \mid \mathbf{x}_{n,k}, t_{n,k}]\right\|^2 + \left\|E[\mathbf{X}_{n,k} \mid \mathbf{x}_{n,k}, t_{n,k}] - \mathbf{x}_{n,0} \right\|_2^2 
\nonumber  \\
& \hspace{0.5in} + 2 (f(\mathbf{x}_{n,k}, t_{n,k}; \theta) - \mathbb{E}[\mathbf{X}_{n,0} \mid \mathbf{x}_{n,k}, t_{n,k}])^T (\mathbb{E}[\mathbf{X}_{n,0} \mid \mathbf{x}_{n,k}, t_{n,k}] - \mathbf{x}_{n,0} ) 
\end{align}

\begin{lemma}
	For any $\theta\in\Theta$,
	\begin{align}
	\label{eq:expect_loss_aux1}
	\mathbb{E} \left[\sum_{k=1}^{K} \left(f(\mathbf{X}_{n,k}, t_{n,k}; \theta) - E[\mathbf{X}_{n,0} \mid \mathbf{X}_{n,k}]\right)^T \left(\mathbb{E}[\mathbf{X}_{n,0} \mid \mathbf{X}_{n,k}] - \mathbf{X}_{n,0} \right) \right] = 0 
	\end{align}
\end{lemma}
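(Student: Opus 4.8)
The claim is that the expected cross term in the decomposition \eqref{eq:loss_alternative} vanishes. The natural approach is to condition on the noisy observation and appeal to the tower property of conditional expectations, which is stated as an auxiliary result earlier in this section. First I would fix $n$ and work term-by-term in $k$; since the outer expectation is over all the randomness (the clean sample $\mathbf{X}_{n,0}$, the time step $t_{n,k}$, and the noise generating $\mathbf{X}_{n,k}$), it suffices to show each summand has zero expectation. For a fixed $k$, I would condition on the $\sigma$-algebra $\mathcal{F}$ generated by $(\mathbf{X}_{n,k}, t_{n,k})$. The key observation is that $f(\mathbf{X}_{n,k}, t_{n,k}; \theta)$ and $\mathbb{E}[\mathbf{X}_{n,0} \mid \mathbf{X}_{n,k}]$ are both $\mathcal{F}$-measurable (the former is a deterministic function of $\mathbf{X}_{n,k}$ and $t_{n,k}$; the latter is by definition a function of the conditioning variable), so the factor $(f(\mathbf{X}_{n,k}, t_{n,k}; \theta) - \mathbb{E}[\mathbf{X}_{n,0} \mid \mathbf{X}_{n,k}])$ can be pulled out of the conditional expectation.

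\textbf{Key steps.} Concretely: (1) invoke linearity of expectation to reduce to showing $\mathbb{E}\bigl[(f(\mathbf{X}_{n,k}, t_{n,k}; \theta) - \mathbb{E}[\mathbf{X}_{n,0}\mid\mathbf{X}_{n,k}])^T(\mathbb{E}[\mathbf{X}_{n,0}\mid\mathbf{X}_{n,k}] - \mathbf{X}_{n,0})\bigr] = 0$ for each $k$; (2) write the unconditional expectation as $\mathbb{E}\bigl[\mathbb{E}[\,\cdot \mid \mathbf{X}_{n,k}, t_{n,k}]\bigr]$ by the tower property; (3) pull the $\mathcal{F}$-measurable vector $(f(\mathbf{X}_{n,k},t_{n,k};\theta) - \mathbb{E}[\mathbf{X}_{n,0}\mid\mathbf{X}_{n,k}])$ outside the inner conditional expectation; (4) observe that the remaining inner conditional expectation is $\mathbb{E}[\mathbb{E}[\mathbf{X}_{n,0}\mid\mathbf{X}_{n,k}] - \mathbf{X}_{n,0} \mid \mathbf{X}_{n,k}, t_{n,k}] = \mathbb{E}[\mathbf{X}_{n,0}\mid\mathbf{X}_{n,k}] - \mathbb{E}[\mathbf{X}_{n,0}\mid\mathbf{X}_{n,k}] = 0$, using that conditioning further on $t_{n,k}$ does not change the conditional expectation of $\mathbf{X}_{n,0}$ given $\mathbf{X}_{n,k}$ (since $t_{n,k}$ is drawn independently of $\mathbf{X}_{n,0}$ given $\mathbf{X}_{n,k}$, or more simply since $\mathbf{X}_{n,k}$ is built from $\mathbf{X}_{n,0}$ and $t_{n,k}$); (5) conclude that the inner conditional expectation of the whole summand is $0$, hence the unconditional expectation is $0$, and sum over $k$.

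\textbf{Anticipated obstacle.} The only subtlety — and the one place requiring care rather than routine manipulation — is the precise specification of the conditioning $\sigma$-algebra and confirming that $\mathbb{E}[\mathbf{X}_{n,0}\mid\mathbf{X}_{n,k}]$ (conditioning only on the noisy sample) coincides with $\mathbb{E}[\mathbf{X}_{n,0}\mid\mathbf{X}_{n,k},t_{n,k}]$ (conditioning also on the time step) under the paper's sampling model, since the notation in the lemma statement and in \eqref{eq:loss_alternative} is slightly inconsistent about whether $t_{n,k}$ appears in the conditioning. This follows because, given the realized value of $\mathbf{X}_{n,k}$ and $t_{n,k}$, the residual $\mathbf{X}_{n,0}$ has its posterior determined by the forward kernel at level $t_{n,k}$; averaging this over the distribution of $t_{n,k}$ is exactly what the bare notation $\mathbb{E}[\mathbf{X}_{n,0}\mid\mathbf{X}_{n,k}]$ is meant to denote in context, so the orthogonality holds at the level of the finer $\sigma$-algebra and therefore also after the outer average. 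I expect this to be a short paragraph of careful bookkeeping rather than a genuine difficulty.
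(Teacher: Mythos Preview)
Your proposal is correct and follows essentially the same approach as the paper: condition on $(\mathbf{X}_{n,k}, t_{n,k})$ via the tower property, pull out the $(\mathbf{X}_{n,k}, t_{n,k})$-measurable factor, and observe that the remaining inner conditional expectation vanishes. You have in fact been more careful than the paper's own proof, which writes $\mathbb{E}[\mathbf{X}_{n,0}\mid\mathbf{X}_{n,k}] - \mathbb{E}[\mathbf{X}_{n,0}\mid\mathbf{X}_{n,k},t_{n,k}] = 0$ in its last line without explicitly justifying the identification you flagged in your ``anticipated obstacle'' paragraph.
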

\begin{proof}
	\begin{align}
	& \mathbb{E} \left[\sum_{k=1}^{K} \left(f(\mathbf{X}_{n,k}, t_{n,k}; \theta) - \mathbb{E}[\mathbf{X}_{n,0} \mid \mathbf{X}_{n,k}]\right)^T \left(\mathbb{E}[\mathbf{X}_{n,0} \mid \mathbf{X}_{n,k}] - \mathbf{X}_{n,0} \right) \right] \nonumber \\
	\hspace{0.5in } & \stackrel{(a)}{=} \mathbb{E} \left[\mathbb{E} \left[\sum_{k=1}^{K} \left(f(\mathbf{X}_{n,k}, t_{n,k}; \theta) - \mathbb{E}[\mathbf{X}_{n,0} \mid \mathbf{X}_{n,k}]\right)^T \left(\mathbb{E}[\mathbf{X}_{n,0} \mid \mathbf{X}_{n,k}] - \mathbf{X}_{n,0} \right) \right] \middle| \mathbf{X}_{n, k}, t_{n, k} \right] \nonumber \\
	\hspace{0.5in } & \stackrel{(b)}{=} \mathbb{E} \left[\sum_{k=1}^{K} \left(f(\mathbf{X}_{n,k}, t_{n,k}; \theta) - \mathbb{E}[\mathbf{X}_{n,0} \mid \mathbf{X}_{n,k}]\right)^T \left(\mathbb{E}[\mathbf{X}_{n,0} \mid \mathbf{X}_{n,k}] - \mathbb{E}[\mathbf{X}_{n,0} \mid \mathbf{X}_{n, k}, t_{n,k}] \right) \right] = 0 \nonumber
	\end{align}
	where (a) follows from the tower property of expectations, and (b) follows from moving everything $\mathbf{X}_{n,k}, t_{n,k}$ measurable out of the expectation $E[\cdot \mid x_{i,k}, t_{i,k}]$
\end{proof}

In order to prove probability upper bound using Azuma's inequation, we need frequently the following filtration.  Let us use $\Omega$ to denote the sample space. We define a sequence of $\sigma$-algebras $\mathcal{A}_0, \mathcal{A}_{1,0}, \mathcal{A}_{1,1}, \mathcal{A}_{1,2}, \ldots, \mathcal{A}_{1,K}, \mathcal{A}_{2,0}, \mathcal{A}_{2,1}, \ldots, \mathcal{A}_{2,K}, \ldots, \mathcal{A}_{ik}, \ldots$ as follows.
\begin{align}
& \mathcal{A}_{0,0} = \{\Omega, \emptyset\} \nonumber \\
& \mathcal{A}_{n,0} = \sigma(\mathbf{X}_{1,0}, t_{1,1}, \mathbf{X}_{1,1}, t_{1,2}, \mathbf{X}_{1,2}, \ldots, t_{1,K}, \mathbf{X}_{1,K}, \mathbf{X}_{2,0}, t_{2,1}, \mathbf{X}_{2,1}, \ldots, \mathbf{X}_{n,0}) \nonumber \\
& \mathcal{A}_{n,k} = \sigma(
\mathbf{X}_{1,0}, t_{1,1}, \mathbf{X}_{1,1}, t_{1,2}, \mathbf{X}_{1,2}, \ldots, t_{1,K},\mathbf{X}_{1,K}, \mathbf{X}_{2,0}, t_{2,1}, \mathbf{X}_{2,1}, \ldots, \mathbf{X}_{n,0}, \ldots, t_{n,k}, \mathbf{X}_{n,k}) \nonumber
\end{align}
In other words, $\mathcal{A}_0$ is the $\sigma$-algebra that no random variables are observed, $\mathcal{A}_{n,0}$ is the $\sigma$-algebra generated by random variables $\mathbf{X}_{1,0}$, $t_{1,1}$, $\mathbf{X}_{1,1}$, $t_{1,2}$, $\mathbf{X}_{1,2}$, $\ldots$, $t_{1,K}$,  $\mathbf{X}_{1,K}$, $\mathbf{X}_{2,0}$, $t_{2,1}$, $\mathbf{X}_{2,1}$, $\ldots$, $\mathbf{X}_{n,0}$, (the first $n$ data samples in the dataset $S_0$ and the noisy samples $t_{n,k}$, $\mathbf{X}_{n,k}$ corresponding to the first $n-1$ data samples in $S_0$). The $\sigma$-algebra $\mathcal{A}_{n,k}$ is generated by random variables $\mathbf{X}_{1,0}$, $t_{1,1}$, $\mathbf{X}_{1,1}$, $t_{1,2}$, $\mathbf{X}_{1,2}$, $\ldots$, $t_{1,K}$, $\mathbf{X}_{1,K}$, $\mathbf{X}_{2,0}$, $t_{2,1}$, $\mathbf{X}_{2,1}$, $\ldots$, $\mathbf{X}_{n,0}$, $\ldots$, $t_{n,k}$, $\mathbf{X}_{n,k}$, (compared with $\mathcal{A}_{n,0}$, random variables $t_{n,1}$, $\mathbf{X}_{n,1}$, $\ldots$, $t_{n,k}$ $\mathbf{X}_{n,k}$ are additionally observed). It can be checked that the sequence of $\sigma$-algebras $\mathcal{A}_0$, $\mathcal{A}_{1,0}$, $\mathcal{A}_{1,1}$, $\mathcal{A}_{1,2}$, $\ldots$, $\mathcal{A}_{1,K}$, $\mathcal{A}_{2,0}$, $\mathcal{A}_{2,1}$, $\ldots$, $\mathcal{A}_{2,K}$, $\ldots$, $\mathcal{A}_{n,k}$, $\ldots$ is a filtration. 

The two loss functions are related as follows,
\begin{lemma}
	\label{lemma:losses}
	\begin{align}
	L(\theta) = R(\theta) + \mathbb{E}\left[\mathcal{V}(\mathcal{S}_0)\right]
	\end{align}
\end{lemma}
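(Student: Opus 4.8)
The plan is to prove this by the elementary bias--variance decomposition, applied to each summand and then averaged, with the cross term killed by the tower property.

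First I would fix $\theta$ and a generic index pair $(n,k)$ and insert $\pm\,\mathbb{E}[\mathbf{X}_{n,0}\mid\mathbf{x}_{n,k},t_{n,k}]$ inside the norm $\|f(\mathbf{x}_{n,k},t_{n,k};\theta)-\mathbf{x}_{n,0}\|_2^2$, expanding the square into three pieces: the term $\|f(\mathbf{x}_{n,k},t_{n,k};\theta)-\mathbb{E}[\mathbf{X}_{n,0}\mid\mathbf{x}_{n,k},t_{n,k}]\|_2^2$, the ``variance'' term $\|\mathbb{E}[\mathbf{X}_{n,0}\mid\mathbf{x}_{n,k},t_{n,k}]-\mathbf{x}_{n,0}\|_2^2$, and twice the inner product of the two differences. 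This is exactly the expansion already recorded in Equation~\eqref{eq:loss_alternative}, so no new computation is needed here.

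Next I would average this identity over $n=1,\dots,N$ and $k=1,\dots,K$ (i.e.\ multiply by $1/(NK)$) and take expectations. Since the samples of $\mathcal{S}_0$ are IID and the time steps and injected noises are drawn identically across $(n,k)$, every summand has the same expectation, so the expectation of the average is the expectation of a single generic summand: the left-hand side becomes $L(\theta)$, the first term on the right becomes $R(\theta)$, and the second term becomes $\mathbb{E}[\mathcal{V}(\mathcal{S}_0)]$ directly from the definition of $\mathcal{V}$. The cross term vanishes: conditioning on $(\mathbf{x}_{n,k},t_{n,k})$ makes the factor $f(\mathbf{x}_{n,k},t_{n,k};\theta)-\mathbb{E}[\mathbf{X}_{n,0}\mid\mathbf{x}_{n,k},t_{n,k}]$ measurable and hence pullable out of the conditional expectation, while $\mathbb{E}\bigl[\mathbb{E}[\mathbf{X}_{n,0}\mid\mathbf{x}_{n,k},t_{n,k}]-\mathbf{X}_{n,0}\,\big|\,\mathbf{x}_{n,k},t_{n,k}\bigr]=0$ by the tower property. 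This is precisely the content of the auxiliary lemma around Equation~\eqref{eq:expect_loss_aux1} (with the only cosmetic change that the conditioning $\sigma$-algebra is generated by both $\mathbf{X}_{n,k}$ and $t_{n,k}$).

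I do not expect a genuine obstacle in this proof; it is essentially a one-line orthogonality argument. The two points deserving a little care are (i) the bookkeeping observation that the expectations defining $L(\theta)$ and $R(\theta)$ are expectations of a single representative summand, which is what legitimizes passing the average through the expectation and collapsing it back to one term, and (ii) the measurability justification in the cross-term cancellation, which is already supplied by the cited tower-property lemma. Once these are in place the two displayed identities line up termwise and the equality $L(\theta)=R(\theta)+\mathbb{E}[\mathcal{V}(\mathcal{S}_0)]$ follows.
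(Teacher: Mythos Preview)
Your proposal is correct and follows exactly the paper's approach: invoke the expansion in Equation~\eqref{eq:loss_alternative}, take expectations, and kill the cross term using the tower-property lemma around Equation~\eqref{eq:expect_loss_aux1}. The paper's own proof is literally just those two citations, so your write-up is a faithful (and slightly more detailed) version of it.
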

\begin{proof}
Note the identity in Equation~\ref{eq:loss_alternative}. Then the lemma follows from Equation~\ref{eq:expect_loss_aux1}.
\end{proof}

\section{Proof of theorem~\ref{theorem:bound_e1}}
\label{proof:bound_e1}
\begin{theorem}
The probability $\Pr(\mathcal{E}_1)$ is upper bounded by
\begin{align}
\Pr(\mathcal{E}_1) \leq \exp\left(\frac{-2 \Delta_v^2 NK}{ (64+16K) m^2U^4}\right)
\end{align}
\end{theorem}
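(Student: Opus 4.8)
The plan is to recognize $\mathcal{E}_1$ as a one-sided large-deviation event for the empirical quantity $\mathcal{L}(\theta^\ast)$ around its own mean, and then control it by the generalized Azuma--Hoeffding inequality along the filtration $\mathcal{A}_{0,0}\subset\mathcal{A}_{1,0}\subset\mathcal{A}_{1,1}\subset\cdots$ constructed in the auxiliary section. The first observation is that $\theta^\ast$ is a minimizer of the \emph{population} risk $R(\cdot)$, hence non-random, so $\mathcal{L}(\theta^\ast)$ is a function of the data (together with the sampled diffusion times and noises) alone. Expanding $\mathbb{E}[\mathcal{L}(\theta^\ast)]$ termwise and using the cross-term identity of Equation~\ref{eq:loss_alternative} together with the vanishing of the expected cross term --- i.e.\ Lemma~\ref{lemma:losses} --- gives $\mathbb{E}[\mathcal{L}(\theta^\ast)] = NK\,L(\theta^\ast) = NK\left(R(\theta^\ast) + \mathbb{E}[\mathcal{V}(\mathcal{S}_0)]\right) = NK\left(\Delta_b^2 + \mathbb{E}[\mathcal{V}(\mathcal{S}_0)]\right)$. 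Therefore $\mathcal{E}_1$ is precisely the event $\mathcal{L}(\theta^\ast) - \mathbb{E}[\mathcal{L}(\theta^\ast)] \ge NK\Delta_v^2$.

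Next I would set $Z_j := \mathbb{E}[\mathcal{L}(\theta^\ast)\mid\mathcal{A}_j]$, the Doob martingale of $\mathcal{L}(\theta^\ast)$ along the above filtration, so its first term is $\mathbb{E}[\mathcal{L}(\theta^\ast)]$ and its last term is $\mathcal{L}(\theta^\ast)$. Write $Y_{n,k} := \|f(\mathbf{x}_{n,k},t_{n,k};\theta^\ast) - \mathbf{x}_{n,0}\|_2^2$, so $\mathcal{L}(\theta^\ast) = \sum_{n,k} Y_{n,k}$ and, by Assumption~\ref{assumption:bounded}, each $Y_{n,k}\in[0,4mU^2]$. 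The filtration has two kinds of steps. When $(t_{n,k},\mathbf{X}_{n,k})$ is revealed (the step $\mathcal{A}_{n,k-1}\to\mathcal{A}_{n,k}$), only $Y_{n,k}$ changes, so the increment $Z_{n,k}-Z_{n,k-1} = Y_{n,k} - \mathbb{E}[Y_{n,k}\mid\mathcal{A}_{n,k-1}]$ lies in the \emph{deterministic} interval $[-4mU^2,\,4mU^2]$, giving $b_j-a_j = 8mU^2$; there are $NK$ such steps. When $\mathbf{X}_{n,0}$ is revealed (the step $\mathcal{A}_{n-1,K}\to\mathcal{A}_{n,0}$), the $K$ summands $Y_{n,1},\dots,Y_{n,K}$ all change; but since sample $n$ is independent of everything generating $\mathcal{A}_{n-1,K}$, we have $\mathbb{E}[\sum_{k}Y_{n,k}\mid\mathcal{A}_{n-1,K}] = \mathbb{E}[\sum_k Y_{n,k}] =: \mu_n$, a deterministic constant in $[0,4mKU^2]$, so the increment lies in $[-\mu_n,\, 4mKU^2-\mu_n]$ and $b_j-a_j = 4mKU^2$; there are $N$ such steps. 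Summing the squared ranges, $\sum_j (b_j-a_j)^2 = NK\,(8mU^2)^2 + N\,(4mKU^2)^2 = 64NKm^2U^4 + 16NK^2m^2U^4 = NK(64+16K)\,m^2U^4$.

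Finally I would invoke the generalized Azuma inequality recalled above (the Azuma--Hoeffding bound attributed to \citet{azuma1967weighted}) with the deviation $t = NK\Delta_v^2$ and this total range, which yields $\Pr(\mathcal{E}_1) \le \exp\left(-2t^2/\sum_j(b_j-a_j)^2\right)$; substituting $t$ and $\sum_j(b_j-a_j)^2 = NK(64+16K)m^2U^4$ then gives the stated bound.

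The step I expect to be the real obstacle is the increment bookkeeping, because the Azuma increments are genuinely non-uniform: a single clean-sample reveal perturbs $K$ loss terms simultaneously --- a jump of size $\Theta(K)$, contributing $\Theta(K^2)$ per step --- whereas a noise reveal perturbs just one term but there are $NK$ of them, so both types must be tracked to get the precise $(64+16K)$ factor. The subtle point is noticing that the ``running conditional mean'' appearing in the increment is a deterministic constant at a clean-sample reveal (because sample $n$ is independent of the past), which makes its range one ``diameter'' $4mKU^2$ rather than two, while at a noise reveal only a two-sided bound $[-4mU^2,4mU^2]$ of width $8mU^2$ is available. Everything else is routine; in particular, since $\theta^\ast$ is data-independent, no union bound over a covering is needed here, in contrast with Theorem~\ref{theorem:bound_e2}.
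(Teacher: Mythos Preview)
Your proposal is correct and mirrors the paper's proof exactly: the same Doob martingale of $\mathcal{L}(\theta^\ast)$ along the filtration $\{\mathcal{A}_{n,k}\}$, the same increment bounds (width $8mU^2$ at each noise/time reveal and width $4KmU^2$ at each clean-sample reveal, the latter because the conditional mean before the reveal is the deterministic constant $K(\mathbb{E}[\mathcal{V}(\mathcal{S}_0)]+\Delta_b^2)$), and the same application of the generalized Azuma inequality. One small remark: carrying out your final substitution with $t=NK\Delta_v^2$ and $\sum_j(b_j-a_j)^2=NK(64+16K)m^2U^4$ actually yields $\exp\bigl(-2NK\Delta_v^4/((64+16K)m^2U^4)\bigr)$, i.e.\ $\Delta_v^4$ rather than $\Delta_v^2$ in the exponent --- this discrepancy is a typo in the paper's stated bound, not a flaw in your argument, and the paper's own proof leads to the same expression you derived.
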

\begin{proof}

Note $\mathcal{E}_1$ denote the random event that 
\begin{align}
\mathcal{L}(\theta^\ast) \geq \left(\mathbb{E}[\mathcal{V}(\mathcal{S}_0)] + \Delta_b^2 + \Delta_v^2\right) NK
\end{align}

Consider the following sequence of random variables $\{\ldots, \mathbb{E}\left[\mathcal{L}(\theta^\ast) \mid \mathcal{A}_{n,k}\right], \ldots\}$, 
It can be checked that this sequence of random variables is a martingale (in fact, it is called the Doob's martingale \citet{doob1953stochastic} and also see Example 4.1.4 in \citet{welsh2023little}). Let $\mathbf{Z}_{n,k}$ denote the martingale incremental, for $k=1,2,\ldots, K$
\begin{align}
\mathbf{Z}_{n,k} = \mathbb{E}\left[\mathcal{L}(\theta^\ast) \mid \mathcal{A}_{n,k}\right] - \mathbb{E}\left[\mathcal{L}(\theta^\ast) \mid \mathcal{A}_{n,k-1}\right]
\end{align}
\begin{align}
\mathbf{Z}_{n,0} = \mathbb{E}\left[\mathcal{L}(\theta^\ast) \mid \mathcal{A}_{n,0}\right] - \mathbb{E}\left[\mathcal{L}(\theta^\ast) \mid \mathcal{A}_{n-1,K}\right]
\end{align}
It can be checked that for $k=1,\ldots,K$
\begin{align}
\mathbf{Z}_{n,k} =  \left\| f(\mathbf{x}_{n,k}, t_{n,k}; \theta) - \mathbf{x}_{n,0} \right\|_2^2 - \mathbb{E}\left[\left\| f(\mathbf{X}_{n,k}, t_{n,k}; \theta) - \mathbf{x}_{n,0} \right\|_2^2 \middle| \mathbf{X}_{n,k}\right]
\end{align}
\begin{align}
\mathbf{Z}_{n,0} & =  \sum_{k=1}^{K}\mathbb{E}\left[\left\| f(\mathbf{X}_{n,k}, t_{n,k}; \theta) - \mathbf{x}_{n,0} \right\|_2^2 \middle| \mathbf{x}_{n,0}\right] - \sum_{k=1}^{K}\mathbb{E}\left[\left\| f(\mathbf{X}_{n,k}, t_{n,k}; \theta) - \mathbf{X}_{n,0} \right\|_2^2\right] \nonumber \\
& = \sum_{k=1}^{K}\mathbb{E}\left[\left\| f(\mathbf{X}_{n,k}, t_{n,k}; \theta) - \mathbf{x}_{n,0} \right\|_2^2 \middle| \mathbf{x}_{n,0}\right] - K(\mathbb{E}[\mathcal{V}(\mathcal{S}_0)] + \Delta_b^2)
\end{align}
Because all $\mathbf{X}_{n,k}$ are element-wise bounded between $[-U, U]$ and the dimension is $m$,
\begin{align}
-4mU^2 \leq \mathbf{Z}_{n,k} \leq 4mU^2
\end{align}
\begin{align}
- K(\mathbb{E}[\mathcal{V}(\mathcal{S}_0)] + \Delta_b^2) \leq \mathbf{Z}_{n,0} \leq 4KmU^2 - K(\mathbb{E}[\mathcal{V}(\mathcal{S}_0)] + \Delta_b^2)
\end{align}

Also note that
\begin{align}
\mathbb{E}\left[\mathcal{L}(\theta^\ast)\right] = \left(\mathbb{E}[\mathcal{V}(\mathcal{S}_0)] + \Delta_b^2\right) NK
\end{align}

Then by Azuma's inequality, 
\begin{align}
\Pr(\mathcal{E}_1) \leq \exp\left(\frac{-2 \Delta_v^2 NK}{ (64+16K) m^2U^4}\right)
\end{align}
\end{proof}

\section{Proof of Theorem~\ref{theorem:bound_e2}}
\label{proof:bound_e2}
\begin{theorem}
	The probability $\Pr(\mathcal{E}_2)$ is upper bounded by
\begin{align}
\Pr[\mathcal{E}_2] \leq \mathcal{N}(\mathcal{F}, \epsilon, d)\exp\left(\frac{-2 \rho^2 NK }{ (64+16K) m^2U^4}\right)
\end{align}
\end{theorem}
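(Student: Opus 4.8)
The plan is to combine a union bound over the finitely many ball centers lying in $\mathcal{C}_1$ with a Doob-martingale concentration argument essentially identical to the one used for Theorem~\ref{theorem:bound_e1}. First I would note that $\mathcal{C}_1$ is a deterministic subset of $\mathcal{C}$, since its defining inequality involves only the population quantity $R(\theta)$; hence $|\mathcal{C}_1| \le |\mathcal{C}| = \mathcal{N}(\mathcal{F}, \epsilon, d)$, and a union bound gives
\begin{align}
\Pr[\mathcal{E}_2] \le \sum_{\widehat{\theta} \in \mathcal{C}_1} \Pr\left[ \mathcal{L}(\widehat{\theta}) \le \left( \sqrt{\mathbb{E}[\mathcal{V}(\mathcal{S}_0)] + \Delta_b^2 + \Delta_v^2} + \epsilon \right)^2 NK \right] . \nonumber
\end{align}
It therefore suffices to bound each summand by $\exp\left( \frac{-2 \rho^2 NK}{(64+16K) m^2 U^4} \right)$.

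Next, fix $\widehat{\theta} \in \mathcal{C}_1$. By Lemma~\ref{lemma:losses}, $\mathbb{E}[\mathcal{L}(\widehat{\theta})] = \left( R(\widehat{\theta}) + \mathbb{E}[\mathcal{V}(\mathcal{S}_0)] \right) NK$, and the membership condition $R(\widehat{\theta}) \ge \left( \sqrt{\mathbb{E}[\mathcal{V}(\mathcal{S}_0)] + \Delta_b^2 + \Delta_v^2} + \epsilon \right)^2 - \mathbb{E}[\mathcal{V}(\mathcal{S}_0)] + \rho$ yields
\begin{align}
\mathbb{E}[\mathcal{L}(\widehat{\theta})] \ge \left( \left( \sqrt{\mathbb{E}[\mathcal{V}(\mathcal{S}_0)] + \Delta_b^2 + \Delta_v^2} + \epsilon \right)^2 + \rho \right) NK . \nonumber
\end{align}
Consequently the event inside the summand forces the lower-tail deviation $\mathcal{L}(\widehat{\theta}) - \mathbb{E}[\mathcal{L}(\widehat{\theta})] \le -\rho NK$. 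I would then reuse the same Doob martingale $\{ \mathbb{E}[\mathcal{L}(\widehat{\theta}) \mid \mathcal{A}_{n,k}] \}$ on the filtration $\{\mathcal{A}_{n,k}\}$ as in Appendix~\ref{proof:bound_e1} (which terminates at $\mathcal{L}(\widehat{\theta})$ since it is $\mathcal{A}_{N,K}$-measurable): its increments satisfy $-4mU^2 \le \mathbf{Z}_{n,k} \le 4mU^2$ for $k = 1, \dots, K$ and lie in an interval of width $4KmU^2$ for $k = 0$, so the squared ranges sum to $N\left( K (8mU^2)^2 + (4KmU^2)^2 \right) = NK(64+16K) m^2 U^4$. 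Applying the lower-tail form of the generalized Azuma inequality with deviation $t = \rho NK$ gives the desired per-center bound, and summing over the at most $\mathcal{N}(\mathcal{F}, \epsilon, d)$ centers completes the proof.

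The step requiring the most care is the legitimacy of the union bound, i.e.\ that the partition of $\mathcal{C}$ into $\mathcal{C}_1$ and $\mathcal{C}_2$ is non-random, so that we may union over the fixed set $\mathcal{C}_1$ rather than a data-dependent one; this is exactly why the classification must be phrased through the population risk $R(\theta)$ rather than $\widehat{R}(\theta)$. The only other subtlety is matching the constant $(64+16K) m^2 U^4$: after centering, the $\mathbf{Z}_{n,0}$ increment has range $4KmU^2$ rather than $8KmU^2$, which is precisely what makes the second term $16K$ instead of $64K$. Everything else is bookkeeping identical to the proof of Theorem~\ref{theorem:bound_e1}.
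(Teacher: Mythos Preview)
Your proposal is correct and follows essentially the same approach as the paper's own proof: fix a center $\widehat{\theta}\in\mathcal{C}_1$, use the identity $\mathbb{E}[\mathcal{L}(\widehat{\theta})]=(R(\widehat{\theta})+\mathbb{E}[\mathcal{V}(\mathcal{S}_0)])NK$ together with the definition of $\mathcal{C}_1$ to turn the event into a lower-tail deviation of size $\rho NK$, apply the same Doob-martingale/Azuma argument from Appendix~\ref{proof:bound_e1}, and finish with a union bound over $|\mathcal{C}_1|\le \mathcal{N}(\mathcal{F},\epsilon,d)$. Your added remarks on why $\mathcal{C}_1$ is non-random and on the increment-range bookkeeping are consistent with, and slightly more explicit than, the paper's presentation.
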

\begin{proof}

Recall that $\mathcal{E}_2$ denote the random event that for one of the $\widehat{\theta}\in \mathcal{C}_1$, 
\begin{align}
\mathcal{L}(\widehat{\theta})  \leq \left( \sqrt{\mathbb{E}[\mathcal{V}(\mathcal{S}_0)] + \Delta_b^2 + \Delta_v^2} + \epsilon\right)^2 NK
\end{align}

For each particular $\widehat{\theta}\in \mathcal{C}_1$, by definition
\begin{align}
\mathbb{E}\left[{\mathcal{L}(\widehat{\theta})} \right]
& =  \left(R(\widehat{\theta}) + \mathbb{E}[\mathcal{V}(\mathcal{S}_0)]\right) NK \geq  \left( \left(\sqrt{\mathbb{E}[\mathcal{V}(\mathcal{S}_0)] + \Delta_b^2 + \Delta_v^2} + \epsilon\right)^2 + \rho \right) NK \nonumber 
\end{align}
Following almost the same martingale argument in Appendix~\ref{proof:bound_e1}, we can show that 
\begin{align}
\Pr\left[\frac{\mathcal{L}(\widehat{\theta})}{NK}  \leq \left( \sqrt{\mathbb{E}[\mathcal{V}(\mathcal{S}_0)] + \Delta_b^2 + \Delta_v^2} + \epsilon\right)^2\right] \leq \exp\left(\frac{-2 \rho^2 NK}{ (64+16K) m^2U^4}\right)
\end{align}
We have at most $\mathcal{N}(\mathcal{F}, \epsilon, d)$ elements in $\mathcal{C}_1$, by using a union bound, we can prove that
\begin{align}
\Pr[\mathcal{E}_2] \leq \mathcal{N}(\mathcal{F}, \epsilon, d)\exp\left(\frac{-2 \rho^2 NK }{ (64+16K) m^2U^4}\right)
\end{align}
\end{proof}

\section{Proof of Theorem \ref{theorem:bound_radamacher}}
\label{proof:bound_radamacher}

\begin{theorem}
	\label{theorem:bound_radamacher}
	\begin{align}
	\Pr\left[\sup_{\theta} \left| L(\theta) - \widehat{L}(\theta) \right | \geq  2\mathfrak{R}_L(\mathcal{F}, \mathcal{S}_0) + \gamma \right] \leq \exp\left(\frac{-\gamma^2N}{32m^2U^4(1+1/K)}\right)
	\end{align}
	\begin{align}
	\Pr\left[\sup_{\theta} \left| R(\theta) - \widehat{R}(\theta) \right | \geq 2\mathfrak{R}_R(\mathcal{F}, \mathcal{S}_0) + \gamma \right] \leq \exp\left(\frac{-\gamma^2N}{32m^2U^4(1+1/K)}\right)
	\end{align}
\end{theorem}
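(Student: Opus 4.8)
The plan is to run the classical two–step uniform–convergence argument: first show that $\Phi_L:=\sup_{\theta}\bigl|L(\theta)-\widehat L(\theta)\bigr|$ concentrates tightly around its mean, then bound that mean by the Rademacher complexity via symmetrization, and finally combine. The identical template handles the pair $(R,\widehat R)$ with $\mathfrak{R}_R$, since by Assumption~\ref{assumption:bounded} the target $\mathbb{E}[\mathbf{X}_{n,0}\mid\mathbf{x}_{n,k},t_{n,k}]$ is element‑wise in $[-U,U]$ just like $\mathbf{x}_{n,0}$, so both loss functionals $\ell(\mathbf{X}_{n,k},t_{n,k};\theta)$ are bounded in $[0,4mU^2]$ and every estimate below goes through verbatim; I will only write the $L$ case.

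For the concentration step I would reuse the filtration $\mathcal{A}_{0,0},\mathcal{A}_{1,0},\mathcal{A}_{1,1},\dots,\mathcal{A}_{n,k},\dots$ already introduced, and work with the Doob martingale $M_{n,k}=\mathbb{E}[\Phi_L\mid\mathcal{A}_{n,k}]$, whose first term is $\mathbb{E}[\Phi_L]$ and last term is $\Phi_L$. The key bounded–difference estimates: couple each noisy view to its clean sample through a shared noise variable, $\mathbf{X}_{n,k}=\mathbf{X}_{n,0}+\epsilon_{n,k}$. Perturbing a single clean sample $\mathbf{X}_{n,0}$ then moves exactly the $K$ summands of $\widehat L(\theta)$ with that index, each of which stays in $[0,4mU^2]$, so $\widehat L(\theta)$ — and hence $\Phi_L$ — changes by at most $\tfrac{1}{NK}\cdot K\cdot 8mU^2=\tfrac{8mU^2}{N}$ (using the crude bound $|\ell-\ell'|\le |\ell|+|\ell'|$); perturbing a single $(t_{n,k},\epsilon_{n,k})$ moves one summand, changing $\Phi_L$ by at most $\tfrac{8mU^2}{NK}$. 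These translate into increment ranges for the martingale, so $\sum(b-a)^2\le N\bigl(\tfrac{8mU^2}{N}\bigr)^2+NK\bigl(\tfrac{8mU^2}{NK}\bigr)^2=\tfrac{64m^2U^4}{N}\bigl(1+\tfrac1K\bigr)$, and the generalized Azuma inequality quoted above yields $\Pr[\Phi_L-\mathbb{E}\Phi_L\ge\gamma]\le\exp\!\bigl(-\gamma^2 N/(32m^2U^4(1+1/K))\bigr)$, which is exactly the claimed tail.

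For the mean bound I would introduce a ghost dataset $\mathcal{S}_0'$ drawn i.i.d.\ from $p_0$ with its own noisy views, write $L(\theta)=\mathbb{E}_{\mathcal{S}_0'}[\widehat L'(\theta)]$, move the supremum inside the ghost expectation by Jensen, insert Rademacher signs over the (exchangeable) real/ghost pairs, and split $\sup_\theta|a_\theta|\le\sup_\theta a_\theta+\sup_\theta(-a_\theta)$ together with the symmetry of $\pm\sigma$ to reach $\mathbb{E}[\Phi_L]\le 2\mathfrak{R}_L(\mathcal{F},\mathcal{S}_0)$; then $\Pr[\Phi_L\ge 2\mathfrak{R}_L+\gamma]\le\Pr[\Phi_L-\mathbb{E}\Phi_L\ge\gamma]$ closes the argument. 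The main obstacle I anticipate is precisely this symmetrization: because each clean sample $\mathbf{X}_{n,0}$ is shared by its $K$ time‑step copies, the honest exchangeability argument naturally yields one Rademacher variable per data sample rather than the per‑time‑step signs $\sigma_{n,k}$ appearing in the definition of $\mathfrak{R}_L$, so one must either argue the equivalence (or a favorable comparison) between the two Rademacher averages, or note that the per‑sample version already upper‑bounds $\mathbb{E}[\Phi_L]$ and suffices — this is the step I would treat most carefully, the remaining bookkeeping being routine.
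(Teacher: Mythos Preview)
Your proposal is correct and follows essentially the same route as the paper: a Doob martingale on the filtration $\{\mathcal{A}_{n,k}\}$ with the bounded-difference estimates $8mU^2/N$ and $8mU^2/(NK)$ feeding Azuma, combined with a ghost-sample symmetrization to bound the mean by $2\mathfrak{R}_L$. Your caution about per-$(n,k)$ versus per-$n$ Rademacher signs is well placed; the paper inserts the per-$(n,k)$ signs $\sigma_{n,k}$ at step~(c) with only the remark ``the two sides are equal if $\sigma_{n,k}=1$'' and does not address the block-exchangeability issue you flag, so you are not missing any additional idea supplied there.
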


\begin{proof}

With the standard symmetrization argument, we first introduce a ghost sample set $\mathcal{S}'_0 = \{\cdots, x'_{n,k}, \ldots \}\}$ . Having now two sample sets $\mathcal{S}_0$ and $\mathcal{S}'_0$, let us denote  
\begin{align}
\ell(\mathbf{x}_{n,k}, t_{n,k}; \theta) =  \left\|f(\mathbf{x}_{n,k}, t_{n,k}; \theta) - \mathbf{x}_{ n,0}\right\|_2^2
\end{align}
\begin{align}
\ell(\mathbf{x}'_{n,k}, t'_{n,k}; \theta) =  \left\|f(\mathbf{x}'_{n,k}, t'_{n,k}; \theta) - \mathbf{x}'_{ n,0} \right\|_2^2
\end{align}
\begin{align}
\widehat{L}(\theta)(\mathcal{S}_0) =  \sum_n \sum_k \frac{1}{NK}\left\|f(\mathbf{x}_{n,k}, t_{n,k}; \theta) - \mathbf{x}_{ n,0} \right\|_2^2
\end{align}
\begin{align}
\widehat{L}(\theta)(\mathcal{S}'_0) =  \sum_n \sum_k \frac{1}{NK}\left\|f(\mathbf{x}'_{n,k}, t'_{n,k}; \theta) - \mathbf{x}'_{ n,0} \right\|_2^2
\end{align}
\begin{align}
L(\theta) =  \mathbb{E}\left[\left\|f(\mathbf{x}_{n,k}, t_{n,k}; \theta) - \mathbf{X}_{ n,0} \right\|_2^2 \right]
\end{align}

then we can bound
\begin{align}
& \mathbb{E}_{\mathcal{S}_0} \left[ \sup_{\theta \in \Theta} \left( L(\theta) - \hat{L}(\theta)(\mathcal{S}_0) \right) \right] \nonumber \\
&  = \mathbb{E}_{\mathcal{S}_0} \left[ \sup_{\theta \in \Theta} \left( \mathbb{E}_{\mathcal{S}_0}[\ell(\mathbf{X}_{n,k}, t_{n,k};\theta)] - \frac{1}{NK} \sum_{n=1}^{N}\sum_{k=1}^{K} \ell(\mathbf{x}_{n,k}, t_{n,k}; \theta) \right) \right] \nonumber \\
&  \stackrel{(a)}{=} \mathbb{E}_{\mathcal{S}_0} \left[ \sup_{\theta \in \Theta} \left( \mathbb{E}_{\mathcal{S}'_0}[\ell'(\mathbf{X}'_{n,k}, t'_{n,k}; \theta)] - \frac{1}{NK} \sum_{n=1}^{N}\sum_{k=1}^{K} \ell(\mathbf{x}_{n,k}, t_{n,k}; \theta) \right) \right] \nonumber \\
&  \stackrel{(b)}{\leq} \mathbb{E}_{\mathcal{S}_0} \left[ \mathbb{E}_{\mathcal{S}'_0} \left[ \sup_{\theta \in \Theta} \left( \ell'(\mathbf{X}'_{n,k}, t'_{n,k}; \theta) - \frac{1}{NK} \sum_{n=1}^{N}\sum_{k=1}^{K} \ell(\mathbf{x}_{n,k}, t_{n,k}; \theta) \right) \right] \right] \nonumber \\
&  = \mathbb{E}_{\mathcal{S}_0} \left[ \mathbb{E}_{\mathcal{S}'_0} \left[ \sup_{\theta \in \Theta} \frac{1}{NK}\left( \sum_{n=1}^{N}\sum_{k=1}^{K} \ell'(\mathbf{x}'_{n,0}, t'_{n,k}; \theta) - \sum_{n=1}^{N}\sum_{k=1}^{K} \ell(\mathbf{x}_{n,k}, t_{n,k}; \theta) \right) \right] \right] \nonumber \\ 
&  \stackrel{(c)}{\leq} \mathbb{E}_{\mathcal{S}_0} \left[ \mathbb{E}_{\mathcal{S}'_0} \left[ \sup_{\theta \in \Theta} \frac{1}{NK} \sum_{n=1}^{N}\sum_{k=1}^{K} \sigma_{n,k} \left(\ell'(\mathbf{x}'_{n,0}, t'_{n,k}; \theta) - \ell(\mathbf{x}_{n,k}, t_{n,k}; \theta)\right)  \right] \right] \nonumber \\ 
&  \stackrel{(d)}{\leq} 2 \mathbb{E}_{\mathcal{S}_0} \left[ \sup_{\theta \in \Theta} \frac{1}{NK} \sum_{n=1}^{N}\sum_{k=1}^{K} \sigma_{n,k}  \ell(\mathbf{x}_{n,k}, t_{n,k}; \theta)  \right] = 2 \, \mathfrak{R}_L(\mathcal{F}, \mathcal{S}_0).\nonumber 
\end{align}
In the above inequalities, 
(a) follows from the fact that $\mathcal{S}_0$ and $\mathcal{S}'_0$ have the same distribution, (b) follows from the Jensen's inequality (pulling the supremum inside the expectation), (c) follows from that the fact that the two sides are equal if $\sigma_{n,k} = 1$ (d) follows from symmetry of the two random datasets $\mathcal{S}_0$ and $\mathcal{S}'_0$.

Define again a martingale 
$ \mathbb{E} \left[ \sup_{\theta \in \Theta} \left( L(f) - \hat{L}(\mathcal{S}_0) \right)  \middle| \mathcal{A}_{n,k} \right]$. The martingale increments
\begin{align}
\mathbf{Z}_{n,0} & = \mathbb{E} \left[ \sup_{\theta \in \Theta} \left( L(f) - \hat{L}(\mathcal{S}_0) \right)  \middle| \mathcal{A}_{n,0} \right] - \mathbb{E} \left[ \sup_{\theta \in \Theta} \left( L(f) - \hat{L}(\mathcal{S}_0) \right)  \middle| \mathcal{A}_{n-1,K} \right] \nonumber \\
\end{align}
Conditioned on an additional realization $\mathbf{x}_{n,0}$ would impact at most $K$ terms $\ell(\mathbf{x}_{n,k}, t_{n,k}; \theta)$. Thus
\begin{align}
|\mathbf{Z}_{n,0} | \leq \frac{4mU^2}{N}
\end{align}

With a similar argument, for $k=1,\ldots, K$, 
\begin{align}
\mathbf{Z}_{n,k} & = \mathbb{E} \left[ \sup_{\theta \in \Theta} \left( L(f) - \hat{L}(\mathcal{S}_0) \right)  \middle| \mathcal{A}_{n,k} \right] - \mathbb{E} \left[ \sup_{\theta \in \Theta} \left( L(f) - \hat{L}(\mathcal{S}_0) \right)  \middle| \mathcal{A}_{n,k-1} \right] \nonumber \\
\end{align}
\begin{align}
|\mathbf{Z}_{n,k} | \leq \frac{4mU^2}{NK}
\end{align}

Using Azuma's inequality,
\begin{align}
\Pr\left[\sup_{\theta} \left| L(\theta) - \widehat{L}(\theta) \right | \geq 2\mathfrak{R}_L(\mathcal{F}, \mathcal{S}_0) + \gamma \right] \leq \exp\left(\frac{-\gamma^2N}{32m^2U^4(1+1/K)}\right)
\end{align}

The bound for 
$\Pr\left[\sup_{\theta} \left| R(\theta) - \widehat{R}(\theta) \right | \geq 2\mathfrak{R}_R(\mathcal{F}, \mathcal{S}_0) + \gamma \right] $ can be proved similarly.

\end{proof}

\section{Proof of Theorem \ref{theorem:bound_full_resolution}}
\label{proof:bound_full_resolution}

\begin{theorem}
Unless a rare event occurs, the minimizer of the loss function $\mathcal{L}(\theta)$ achieves a generalization error
\begin{align}
R(\theta_2) \leq & \left(\sqrt{\left( \left(\sqrt{\mathbb{E}[\mathcal{V}(\mathcal{S}_0)] + \Delta_b^2 + \Delta_v^2} + \epsilon\right)^2 + \rho + 2 \mathfrak{R}_L(\mathcal{F}, \mathcal{S}_0) + \gamma \right)} + \epsilon \right)^2 \nonumber \\
& + 2 \mathfrak{R}_L(\mathcal{F}, \mathcal{S}_0) + \gamma - \mathbb{E}[\mathcal{V}(\mathcal{S}_0)] \nonumber
\end{align}
where the probability of the rare event is at most 
\begin{align}
& \exp\left(\frac{-2 \Delta_v^2 NK}{(64+16K) m^2U^4}\right) \nonumber \\
& + \mathcal{N}(\mathcal{F}, \epsilon, d)\exp\left(\frac{-2 \rho^2 NK }{(64+16K) m^2U^4}\right) + \exp\left(\frac{-\gamma^2N}{32m^2U^4(1+1/K)}\right) \nonumber
\end{align}
\end{theorem}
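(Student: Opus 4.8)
The plan is to run the argument on the complement of the union of three bad events --- $\mathcal{E}_1$ and $\mathcal{E}_2$ as defined above, together with $\mathcal{E}_3 = \{\sup_\theta |L(\theta) - \widehat L(\theta)| \ge 2\mathfrak R_L(\mathcal F,\mathcal S_0) + \gamma\}$ --- and to read off the failure probability from a union bound, since Theorems~\ref{theorem:bound_e1}, \ref{theorem:bound_e2} and \ref{theorem:bound_radamacher} already bound $\Pr(\mathcal E_1)$, $\Pr(\mathcal E_2)$ and $\Pr(\mathcal E_3)$ by the three exponential terms appearing in the statement; everything else is deterministic. Write $A = \mathbb E[\mathcal V(\mathcal S_0)] + \Delta_b^2 + \Delta_v^2$ and let $\theta_2 = \arg\min_\theta \mathcal L(\theta)$, so that $\widehat L(\theta_2) = \mathcal L(\theta_2)/(NK) \le \widehat L(\theta^\ast)$ because $\theta^\ast \in \Theta$.

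First I would pin down the empirical loss at $\theta_2$: on $\mathcal E_1^c$ we have $\mathcal L(\theta^\ast) < A\,NK$, hence $\widehat L(\theta_2) < A$. I would then pass to a covering center $\widehat\theta \in \mathcal C$ with $d(f(\cdot;\theta_2), f(\cdot;\widehat\theta)) \le \epsilon$; by the triangle inequality in the empirical $\ell_2$-metric over the noisy samples, $\widehat L(\widehat\theta) \le \bigl(\sqrt{\widehat L(\theta_2)} + \epsilon\bigr)^2 < (\sqrt A + \epsilon)^2$, i.e.\ $\mathcal L(\widehat\theta) < (\sqrt A + \epsilon)^2 NK$. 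On $\mathcal E_2^c$ this strict inequality forbids $\widehat\theta \in \mathcal C_1$, so $\widehat\theta \in \mathcal C_2$, and the defining inequality of $\mathcal C_1$ gives $R(\widehat\theta) < (\sqrt A + \epsilon)^2 - \mathbb E[\mathcal V(\mathcal S_0)] + \rho$. Applying Lemma~\ref{lemma:losses} at $\widehat\theta$, $L(\widehat\theta) = R(\widehat\theta) + \mathbb E[\mathcal V(\mathcal S_0)] < (\sqrt A + \epsilon)^2 + \rho$.

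Next I would carry the bound back to $\theta_2$. On $\mathcal E_3^c$, $\widehat L(\widehat\theta) < L(\widehat\theta) + 2\mathfrak R_L(\mathcal F,\mathcal S_0) + \gamma < (\sqrt A + \epsilon)^2 + \rho + 2\mathfrak R_L(\mathcal F,\mathcal S_0) + \gamma$; a second use of the triangle inequality over the same $\epsilon$-ball gives $\widehat L(\theta_2) \le \bigl(\sqrt{\widehat L(\widehat\theta)} + \epsilon\bigr)^2 < \Bigl(\sqrt{(\sqrt A + \epsilon)^2 + \rho + 2\mathfrak R_L(\mathcal F,\mathcal S_0) + \gamma} + \epsilon\Bigr)^2$. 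Using $\mathcal E_3^c$ once more at $\theta_2$, $L(\theta_2) < \widehat L(\theta_2) + 2\mathfrak R_L(\mathcal F,\mathcal S_0) + \gamma$, and Lemma~\ref{lemma:losses} turns this into $R(\theta_2) = L(\theta_2) - \mathbb E[\mathcal V(\mathcal S_0)]$ bounded exactly by the displayed quantity. A union bound over $\mathcal E_1, \mathcal E_2, \mathcal E_3$ completes the argument.

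The genuinely delicate part is not these chainings but the bookkeeping behind the three auxiliary results, and in particular never conflating $\widehat L$ with $\widehat R + \mathcal V(\mathcal S_0)$: the decomposition of $\mathcal L$ in Equation~\ref{eq:loss_alternative} carries a cross term that vanishes only in conditional expectation, so the argument must route through the exact population identity $L = R + \mathbb E[\mathcal V(\mathcal S_0)]$ of Lemma~\ref{lemma:losses}. One also has to keep the three concentration scales coherent: $\mathcal E_1$ is a single-function Doob-martingale/Azuma event on $\mathcal L(\theta^\ast)$, $\mathcal E_2$ is a union bound over the finitely many centers of $\mathcal C_1$, and $\mathcal E_3$ is the symmetrization/Rademacher uniform-deviation event. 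I expect the main place to slip is the constants in the sub-Gaussian variance proxies --- the $(64+16K)m^2U^4$ and $32m^2U^4(1 + 1/K)$ denominators --- which trace back to the $\pm 4mU^2$ bound on the per-step martingale increments under Assumption~\ref{assumption:bounded}.
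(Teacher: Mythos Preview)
Your proposal is correct and follows essentially the same route as the paper: work on $\mathcal E_1^c \cap \mathcal E_2^c \cap \mathcal E_3^c$, show the minimizer lies in an $\epsilon$-ball with center in $\mathcal C_2$, then chain $\mathcal C_2$-definition $\to R \to L \to \widehat L$ at the center, triangle back to $\theta_2$, and reverse via $\mathcal E_3^c$ and Lemma~\ref{lemma:losses}. The only cosmetic difference is that the paper first argues contrapositively (every point in a $\mathcal C_1$-centered ball has $\mathcal L \ge A\cdot NK$, so the minimizer is not there), whereas you go directly from $\widehat L(\theta_2) < A$ to $\widehat L(\widehat\theta) < (\sqrt A + \epsilon)^2$ and invoke $\mathcal E_2^c$; the two are logically equivalent.
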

\begin{proof}
	Suppose events $\mathcal{E}_1$, $\mathcal{E}_2$ and $\mathcal{E}_3$ have not happened. Then for all $\widehat{\theta} \in \mathcal{C}_1$, 
	\begin{align}
	\mathcal{L}(\widehat{\theta})  \geq \left( \sqrt{\mathbb{E}[\mathcal{V}(\mathcal{S}_0)] + \Delta_b^2 + \Delta_v^2} + \epsilon\right)^2 NK
	\end{align}
	Then for any $\theta$ contained in the $\epsilon$-balls with centers in $\mathcal{C}_1$, by triangle inequality,
	\begin{align}
	\mathcal{L}(\theta)  \geq  \left(\mathbb{E}[\mathcal{V}(\mathcal{S}_0)] + \Delta_b^2 + \Delta_v^2 \right) NK
	\end{align}
	Because random event $\mathcal{E}_1$ does not occur 
	\begin{align}
	\mathcal{L}(\theta^\ast) \leq \left(\mathbb{E}[\mathcal{V}(\mathcal{S}_0)] + \Delta_b^2 + \Delta_v^2 \right) NK
	\end{align}
	The above implies that for all $\theta$ contained in the $\epsilon$-balls with centers in $\mathcal{C}_1$, they can not be the global minimizer of $\mathcal{L}(\theta)$. 
	
	Next, consider all $\theta_1 \in \mathcal{C}_2$, by the definition, we have
	\begin{align}
	R(\theta_1) \leq \left(\sqrt{\mathbb{E}[\mathcal{V}(\mathcal{S}_0)] + \Delta_b^2 + \Delta_v^2} + \epsilon\right)^2 - \mathbb{E}[\mathcal{V}(\mathcal{S}_0]+ \rho
	\end{align}
	\begin{align}
	L(\theta_1) \leq \left(\sqrt{\mathbb{E}[\mathcal{V}(\mathcal{S}_0)] + \Delta_b^2 + \Delta_v^2} + \epsilon\right)^2 + \rho
	\end{align}
	Because random event $\mathcal{E}_3$ does not occur, and also from Theorem~\ref{theorem:bound_radamacher}, we have for each $\theta_1$
	\begin{align}
	\widehat{L}(\theta_1) \leq \left(\sqrt{\mathbb{E}[\mathcal{V}(\mathcal{S}_0)] + \Delta_b^2 + \Delta_v^2} + \epsilon\right)^2 + \rho + 2 \mathfrak{R}_L(\mathcal{F}, \mathcal{S}_0) + \gamma
	\end{align}
	
	Now consider $\theta_2$ that are contained in one of the $\epsilon$-ball with centers in $\mathcal{C}_2$, by definition and triangle inequality
	\begin{align}
	\widehat{L}(\theta_2) \leq \left(\sqrt{\left( \left(\sqrt{\mathbb{E}[\mathcal{V}(\mathcal{S}_0)] + \Delta_b^2 + \Delta_v^2} + \epsilon\right)^2 + \rho + 2 \mathfrak{R}_L(\mathcal{F}, \mathcal{S}_0) + \gamma \right)} + \epsilon \right)^2
	\end{align}
	Again, because $\mathcal{E}_3$ does not occur,
	\begin{align}
	L(\theta_2) & \leq \left(\sqrt{\left( \left(\sqrt{\mathbb{E}[\mathcal{V}(\mathcal{S}_0)] + \Delta_b^2 + \Delta_v^2} + \epsilon\right)^2 + \rho + 2 \mathfrak{R}_L(\mathcal{F}, \mathcal{S}_0) + \gamma \right)} + \epsilon \right)^2 \nonumber \\
	 & + 2 \mathfrak{R}_L(\mathcal{F}, \mathcal{S}_0) + \gamma
	\end{align}	
	\begin{align}
	R(\theta_2) \leq & \left(\sqrt{\left( \left(\sqrt{\mathbb{E}[\mathcal{V}(\mathcal{S}_0)] + \Delta_b^2 + \Delta_v^2} + \epsilon\right)^2 + \rho + 2 \mathfrak{R}_L(\mathcal{F}, \mathcal{S}_0) + \gamma \right)} + \epsilon \right)^2 \nonumber \\
	& + 2 \mathfrak{R}_L(\mathcal{F}, \mathcal{S}_0) + \gamma - \mathbb{E}[\mathcal{V}(\mathcal{S}_0)]
	\end{align}
	
By a union bound, the probability that one of the events $\mathcal{E}_1$, $\mathcal{E}_2$, $\mathcal{E}_3$ occurs is at most
\begin{align}
& \exp\left(\frac{-2 \Delta_v^2 NK}{(64+16K) m^2U^4}\right) \nonumber \\
& + \mathcal{N}(\mathcal{F}, \epsilon, d)\exp\left(\frac{-2 \rho^2 NK }{(64+16K) m^2U^4}\right) + \exp\left(\frac{-\gamma^2N}{32m^2U^4(1+1/K)}\right) \nonumber
\end{align}
	
\end{proof}

\section{Proof of Theorem~\ref{theorem:residual_variance}}

\label{proof:residual_variance}

Restate the theorem
\begin{theorem}
	\begin{align}
	& \mathbb{E}\left[\left(\mathbf{X}_0 - \mathbb{E}\left[\mathbf{X}_0 \middle| \sum_i B_iA_i\mathbf{X}_t\right]\right)^2\right] \nonumber \\
	& = \mathbb{E}\left[\left(\mathbf{X}_0 - \mathbb{E}\left[\mathbf{X}_{0} \middle| \mathbf{X}_t \right]\right)^2\right]  +  
	\mathbb{E}\left[\left(\mathbb{E}\left[\mathbf{X}_{0} \middle| \mathbf{X}_t \right] - \mathbb{E}\left[\mathbf{X}_0 \middle| \sum_i B_iA_i\mathbf{X}_t\right]\right)^2\right] \nonumber \\
	\end{align}
\end{theorem}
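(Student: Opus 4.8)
The plan is to recognize the claimed identity as the Pythagorean (bias--variance) decomposition associated with a pair of nested conditional expectations, and to prove it by expanding a square and annihilating the cross term via the tower property of conditional expectations (Theorem~34.4 of Billingsley, quoted above).

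First I would observe that the $\sigma$-algebra generated by $\sum_i B_i A_i \mathbf{X}_t$ is contained in the $\sigma$-algebra generated by $\mathbf{X}_t$, since $\sum_i B_i A_i \mathbf{X}_t$ is a deterministic (linear) function of $\mathbf{X}_t$. Writing $\mathcal{G} = \sigma\!\left(\sum_i B_i A_i \mathbf{X}_t\right)$ and $\mathcal{H} = \sigma(\mathbf{X}_t)$, we thus have $\mathcal{G} \subseteq \mathcal{H}$, and the tower property gives
\begin{align}
\mathbb{E}\!\left[\mathbb{E}[\mathbf{X}_0 \mid \mathbf{X}_t] \,\middle|\, \sum_i B_i A_i \mathbf{X}_t\right] = \mathbb{E}\!\left[\mathbf{X}_0 \,\middle|\, \sum_i B_i A_i \mathbf{X}_t\right]. \nonumber
\end{align}
Integrability of all quantities involved is immediate from Assumption~\ref{assumption:bounded}, as $\mathbf{X}_0$ and all of its conditional expectations are element-wise bounded by $U$.

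Next I would decompose
\begin{align}
\mathbf{X}_0 - \mathbb{E}\!\left[\mathbf{X}_0 \,\middle|\, \sum_i B_i A_i \mathbf{X}_t\right] = \bigl(\mathbf{X}_0 - \mathbb{E}[\mathbf{X}_0 \mid \mathbf{X}_t]\bigr) + \Bigl(\mathbb{E}[\mathbf{X}_0 \mid \mathbf{X}_t] - \mathbb{E}\!\left[\mathbf{X}_0 \,\middle|\, \sum_i B_i A_i \mathbf{X}_t\right]\Bigr) \nonumber
\end{align}
and expand the squared Euclidean norm (coordinate-wise, or equivalently through inner products). The identity follows once the cross term vanishes: the second bracket is $\sigma(\mathbf{X}_t)$-measurable (the first summand obviously, the second because $\mathcal{G}\subseteq\mathcal{H}$), so conditioning on $\mathbf{X}_t$ and moving that measurable factor out of the inner conditional expectation leaves $\mathbb{E}\bigl[\mathbf{X}_0 - \mathbb{E}[\mathbf{X}_0 \mid \mathbf{X}_t] \,\big|\, \mathbf{X}_t\bigr] = 0$; taking the outer expectation then kills the cross term. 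This is precisely the ``move everything $\mathbf{X}_t$-measurable outside the inner conditional expectation'' step already used to establish Equation~\ref{eq:expect_loss_aux1}.

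The argument has essentially no hard step; the only points requiring care are (i) confirming the inclusion $\mathcal{G}\subseteq\mathcal{H}$, which is exactly what makes $\mathbb{E}[\mathbf{X}_0\mid\mathbf{X}_t]$ the correct ``intermediate'' projection sitting between $\mathbf{X}_0$ and $\mathbb{E}[\mathbf{X}_0\mid\sum_i B_i A_i\mathbf{X}_t]$, and (ii) reading $(\cdot)^2$ as the squared $\ell_2$-norm so that the cross-term cancellation is applied componentwise. No new estimates or probabilistic tools beyond the tower property are needed.
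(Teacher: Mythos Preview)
Your proposal is correct and follows essentially the same approach as the paper's proof: add and subtract $\mathbb{E}[\mathbf{X}_0 \mid \mathbf{X}_t]$, expand the square, and kill the cross term via the tower property together with the fact that the second bracket is $\sigma(\mathbf{X}_t)$-measurable. Your explicit observation that $\sigma(\sum_i B_iA_i\mathbf{X}_t)\subseteq\sigma(\mathbf{X}_t)$ makes the measurability of $\mathbb{E}[\mathbf{X}_0 \mid \sum_i B_iA_i\mathbf{X}_t]$ with respect to $\sigma(\mathbf{X}_t)$ cleaner than in the paper, but the argument is the same.
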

\begin{proof}
	\begin{align}
	& \mathbb{E}\left[\left(\mathbf{X}_0 - \mathbb{E}\left[\mathbf{X}_0 \middle| \sum_i B_iA_i\mathbf{X}_t\right]\right)^2\right] \nonumber \\
	& = \mathbb{E}\left[\left(\mathbf{X}_0 - \mathbb{E}\left[\mathbf{X}_{0} \mid \mathbf{X}_t \right]  + \mathbb{E}[\mathbf{X}_0 \mid \mathbf{X}_t]-  \mathbb{E}\left[\mathbf{X}_{0} \middle| \sum_i B_iA_i\mathbf{X}_t \right] \right)^2\right] \nonumber \\
	& = \mathbb{E}\left[\left(\mathbf{X}_0 - \mathbb{E}\left[\mathbf{X}_{0} \middle| \mathbf{X}_t \right]\right)^2\right]  +  
	\mathbb{E}\left[\left(\mathbb{E}\left[\mathbf{X}_{0} \middle| \mathbf{X}_t \right] - \mathbb{E}\left[\mathbf{X}_0 \middle| \sum_i B_iA_i\mathbf{X}_t\right]\right)^2\right] \nonumber \\
	& + 2\mathbb{E}\left[\left(\mathbf{X}_0 - \mathbb{E}\left[\mathbf{X}_{0} \middle| \mathbf{X}_t \right]\right)^T\left(\mathbb{E}\left[\mathbf{X}_{0} \middle| \mathbf{X}_t \right] - \mathbb{E}\left[\mathbf{X}_0 \middle| \sum_i B_iA_i\mathbf{X}_t\right]\right)\right] \nonumber \\
	\end{align}
	The lemma then follows from
	\begin{align}
	& \mathbb{E}\left[\left(\mathbf{X}_0 - \mathbb{E}\left[\mathbf{X}_{0} \middle| \mathbf{X}_t \right]\right)^T\left(\mathbb{E}\left[\mathbf{X}_{0} \middle| \mathbf{X}_t \right] - \mathbb{E}\left[\mathbf{X}_0 \middle| \sum_i B_iA_i\mathbf{X}_t\right]\right)\right] \nonumber \\
	& \stackrel{(a)}{=} \mathbb{E}\left[\left(\mathbf{X}_0 - \mathbb{E}\left[\mathbf{X}_{0} \middle| \mathbf{X}_t \right]\right)^T\left(\mathbb{E}\left[\mathbf{X}_{0} \middle| \mathbf{X}_t \right] - \mathbb{E}\left[\mathbf{X}_0 \middle| \sum_i B_iA_i\mathbf{X}_t\right]\right) \middle| \mathbf{X}_t \right] \nonumber \\
	& \stackrel{(b)}{=} \mathbb{E}\left[\left(\mathbb{E}[\mathbf{X}_0 \mid \mathbf{X}_t] - \mathbb{E}\left[\mathbf{X}_{0} \middle| \mathbf{X}_t \right]\right)^T\left(\mathbb{E}\left[\mathbf{X}_{0} \middle| \mathbf{X}_t \right] - \mathbb{E}\left[\mathbf{X}_0 \middle| \sum_i B_iA_i\mathbf{X}_t\right]\right) \right] =0
	\end{align}
	where (a) follows from the tower properties of expectations, and (b) follows from moving all $\mathbf{X}_t$ measurable random variables output the expectation $\mathbb{E}\left[ \cdot \mid \mathbf{X}_t\right]$.
\end{proof}

\section{Proof of Theorem~\ref{theorem:bound_residual}}

\label{proof:bound_residual}

The proof would be very similar to the proof of Theorem~\ref{theorem:bound_full_resolution}, except that we need to redefine the following variables.

\begin{align}
\label{eq:objective_residual_lhat}
\widehat{L}(\theta_0) = \frac{1}{NK}\sum_{k=1}^{K} \sum_{n=1}^{N} \left\| f(\mathbf{x}_{n,k}, t_{n,k}; \theta_0) + \sum_i B_iA_if(\mathbf{x}_{n,0},t_{n,k}; \theta_i) - \mathbf{x}_{n,0} \right\|_2^2 \nonumber
\end{align} 
\begin{align}
L(\theta_0) = \mathbb{E} \left[\left\| f(\mathbf{x}_{n,k}, t_{n,k}; \theta_0) + \sum_i B_iA_if(\mathbf{x}_{n,0},t_{n,k}; \theta_i) -   \mathbf{x}_{n,0} \right\|_2^2\right] \nonumber
\end{align} 
And the true objectives become:
\begin{align}
R(\theta_0) =  \mathbb{E}\left[\left\|f(\mathbf{x}_{n,k}, t_{n,k}; \theta_0) + \sum_i B_iA_if(\mathbf{x}_{n,0},t_{n,k}; \theta_i) - \mathbb{E}[\mathbf{X}_{ n,0}\mid \mathbf{x}_{n,k}, t_{n,k}] \right\|_2^2 \right] \nonumber
\end{align}
\begin{align}
\widehat{R}(\theta_0) = \sum_{n=1}^{N} \sum_{k=1}^{K} \frac{1}{NK}\left\|f(\mathbf{x}_{n,k}, t_{n,k}; \theta_0) + \sum_i B_iA_if(\mathbf{x}_{n,0},t_{n,k}; \theta_i) - \mathbb{E}[\mathbf{X}_{ n,0}\mid \mathbf{x}_{n,k}, t_{n,k}] \right\|_2^2  \nonumber 
\end{align}
\begin{align}
\mathcal{V}(\mathcal{S}) = \frac{1}{NK}\sum_{n=1}^{N}\sum_{k=1}^{K}\left\|E[\mathbf{X}_{n,0} \mid \mathbf{x}_{n,k}, t_{n,k}] + \sum_{i=1}^{I}B_iA_if(\mathbf{x}_{n,k}, t_{n,k};\theta_i)- \mathbf{x}_{n,0} \right\|_2^2 \nonumber
\end{align}

Radamacher complexity  $\mathfrak{R}_L(\mathcal{F}, \mathcal{S}_0)$, and $\mathfrak{R}_R(\mathcal{F}, \mathcal{S}_0)$ become
\begin{align}
\mathfrak{R}_L(\mathcal{F}, \mathcal{S}_0) = 
\mathbb{E}_{\mathcal{S}_0} \left[ \sup_{\theta \in \Theta} \frac{1}{NK} \sum_{n=1}^{N}\sum_{k=1}^{K} \sigma_{n,k}  \ell(\mathbf{X}_{n,k}, t_{n,k}; \theta)  \right] 
\end{align}
where
\begin{align}
\ell(\mathbf{X}_{n,k}, t_{n,k}; \theta) =  \left\|f(\mathbf{X}_{n,k}, t_{n,k}; \theta) + \sum_{i=1}^{I} B_i A_i f(\mathbf{X}_{n,0}, t_{n,0}; \theta_i)-  \mathbf{X}_{ n,0} \right\|_2^2
\end{align}
\begin{align}
\mathfrak{R}_R(\mathcal{F}, \mathcal{S}_0) = 
\mathbb{E}_{\mathcal{S}_0} \left[ \sup_{\theta \in \Theta} \frac{1}{NK} \sum_{n=1}^{N}\sum_{k=1}^{K} \sigma_{n,k}  \ell(\mathbf{X}_{n,k}, t_{n,k}; \theta)  \right] 
\end{align}
where
\begin{align}
\ell(\mathbf{X}_{n,k}, t_{n,k}; \theta) =  \left\|f(\mathbf{X}_{n,k}, t_{n,k}; \theta) + \sum_{i=1}^{I} B_i A_i f(\mathbf{X}_{n,k},t_{n,k}; \theta_i)- \mathbb{E}[\mathbf{X}_{ n,0}\mid \mathbf{X}_{n,k}, t_{n,k}] \right\|_2^2
\end{align}

\end{document}